\documentclass{article}

\PassOptionsToPackage{numbers, compress}{natbib}

\usepackage[main, preprint]{neurips_2026}

\usepackage[utf8]{inputenc}
\usepackage[T1]{fontenc}
\usepackage{hyperref}
\usepackage{url}
\usepackage{booktabs}
\usepackage{amsfonts}
\usepackage{nicefrac}
\usepackage{microtype}
\usepackage{xcolor}
\usepackage{graphicx}
\usepackage{multirow}
\usepackage{algorithm}
\usepackage{algorithmic}
\usepackage{amsmath}
\usepackage{amssymb}
\usepackage{amsthm}
\usepackage{tabularx}
\usepackage{array}

\newtheorem{theorem}{Theorem}

\newtheorem{assumption}{Assumption}
\newtheorem{corollary}{Corollary}

\title{PALoRA: Projection-Adaptive LoRA for Preserving Reasoning in Large Language Models}

\author{%
\textbf{Mustafa Hayri Bilgin}\textsuperscript{1,2} \quad
\textbf{Mariam Barry}\textsuperscript{1} \quad
\textbf{Albert Bifet}\textsuperscript{2,3} \\[0.3em]
\textbf{Azzedine Idir Ait Said}\textsuperscript{2} \quad
\textbf{Soumya Banerjee}\textsuperscript{4} \\[0.5em]
\textsuperscript{1}IT Group, Research, BNP Paribas \\
\textsuperscript{2}LTCI, Télécom Paris, Institut Polytechnique de Paris \\
\textsuperscript{3}AI Institute, University of Waikato \\
\textsuperscript{4}Independent Researcher \\
\texttt{\{mustafahayri.bilgin, mariam.barry\}@bnpparibas.com} \\
\texttt{\{albert.bifet, azzedine.aitsaid\}@telecom-paris.fr} \\
\texttt{dr.soumya@ieee.org}
}

\begin{document}

\maketitle

\begin{abstract}
Efficiently updating Large Language Models (LLMs) with new or evolving factual knowledge remains a central challenge, as even parameter-efficient adaptation can erode previously acquired reasoning abilities. This tension reflects a plasticity–stability dilemma: models must incorporate new knowledge while preserving skill-critical representations. In this work, we study this trade-off through the spectral structure of multilayer perceptron weight matrices. We show, both theoretically and empirically, that information essential for reasoning is not localized only in dominant singular directions, but is instead distributed across the singular spectrum. Motivated by this observation, we introduce PALoRA, a two-stage framework for knowledge injection with reduced interference. PALoRA first trains a Singular Value Fine-Tuning (SVF) expert on a reasoning dataset and uses its learned singular scaling vector as a frozen geometric probe to identify components that are critical for the target skill. It then performs factual knowledge injection with Low-Rank Adaptation (LoRA) under a structural orthogonality constraint, ensuring that updates avoid the identified skill-relevant subspace. Across Llama 3.1 8B and Mistral 7B, and across mathematical, coding, and scientific reasoning benchmarks, PALoRA preserves on average 95\% of the SVF expert’s reasoning performance while maintaining competitive factual recall. It consistently improves skill retention over prior spectral Parameter-Efficient Fine-Tuning (PEFT) methods while adding less than 0.006\% parameter overhead. 
\end{abstract}

\section{Introduction}
\label{sec:intro}

Large language models (LLMs) are trained on static corpora, but deployed systems must continually acquire corrected and newly emerging facts. Parameter-efficient fine-tuning (PEFT), especially LoRA~\cite{hu2022lora}, makes such updates computationally feasible. Yet growing evidence shows that factual knowledge injection can come at the cost of core reasoning abilities, degrading performance on mathematical, coding, and scientific tasks~\cite{pletenev2025knowledge}. This tension is particularly acute in Transformer MLP layers, which are known to support both factual recall and procedural reasoning~\cite{geva2021transformer,dai2022knowledge,cobbe2021trainingverifierssolvemath,austin2021programsynthesislargelanguage,clark2018thinksolvedquestionanswering}.

Recent spectral PEFT methods seek to address this trade-off by preserving or excluding the top singular components of pre-trained weights~\cite{pissa2024,wang2025milora,lingam2024svft,oplora2026aaai,yang2024corda}. These approaches implicitly assume that the capabilities worth preserving are concentrated in the principal singular directions. We argue that this view is incomplete. Skill-critical information, while functionally central, may be broadly distributed across the singular spectrum rather than confined to its leading singular directions. Consequently, restricting protection to the top singular vectors can fail to preserve the subspace most responsible for reasoning.

We introduce \textbf{PALoRA} (Projection-Adaptive Low-Rank Adaptation), a two-stage framework that explicitly decouples stability and plasticity within shared MLP weights. In the first stage, we train a Singular Value Finetuning (SVF) expert~\cite{sun2025transformer} on a target reasoning skill and subsequently freeze it. The learned SVF scaling vector serves as a geometric probe, identifying the singular directions most engaged by the skill. In the second stage, we perform factual knowledge injection via a LoRA adapter, while penalizing its projections onto these SVF-identified directions. This yields an update constrained to acquire new knowledge in a complementary subspace, thereby avoiding interference with the pathways supporting reasoning.

\paragraph{Contributions.}
This work makes three contributions. First, we introduce PALoRA, a two-stage adaptation framework that uses a frozen SVF expert to identify a skill-critical subspace and constrains LoRA updates through structural orthogonality (Section~\ref{sec:method}). Second, we provide a theoretical analysis showing that, under a formal misalignment condition, skill-relevant information may be distributed across the singular spectrum rather than confined to the top singular components, exposing a key limitation of existing spectral PEFT methods (Section~\ref{sec:theory-method}). Third, we demonstrate across mathematical reasoning, code generation, and scientific reasoning benchmarks that PALoRA consistently improves skill preservation over prior spectral PEFT baselines while maintaining competitive factual recall (Section~\ref{sec:main-results}).

\section{Related Work}
\label{sec:related}

\paragraph{Knowledge storage in Transformers.}
Transformer models~\cite{vaswani2017attention} combine self-attention mechanisms with position-wise feed-forward, or MLP, blocks. Prior work has shown that these MLP layers play a central role in storing and retrieving parametric knowledge. Geva et al.~\cite{geva2021transformer} characterize feed-forward layers as key--value memories: the first linear projection acts as a pattern detector, while the second maps activated patterns to corresponding output representations. Dai et al.~\cite{dai2022knowledge} provide complementary evidence by identifying individual ``knowledge neurons'' in MLP layers that mediate factual associations. This motivates our focus on MLP layers as the site where knowledge injection and skill preservation must be jointly controlled, since updates that improve factual recall may also interfere with representations needed for reasoning.

\paragraph{PEFT and the plasticity--stability trade-off.}
PEFT methods differ in how they balance adaptation capacity with preservation of pre-trained capabilities. Additive methods such as LoRA~\cite{hu2022lora} achieve strong plasticity by learning low-rank updates on top of frozen model weights, enabling efficient acquisition of new knowledge and task-specific behaviors. In contrast, Singular Value Fine-Tuning (SVF)~\cite{sun2025transformer} modifies only the singular values of pre-trained weights while preserving their original singular directions, yielding a more stability-oriented adaptation mechanism. Recent evidence shows that factual adaptation with LoRA can substantially degrade mathematical, coding, and scientific reasoning skills~\cite{pletenev2025knowledge}, making this trade-off central to LLM updating.

\paragraph{Continual learning and interference mitigation.}
Catastrophic forgetting is a core challenge in continual learning~\cite{mccloskey1989catastrophic,mermillod2013stability}. Regularization-based methods such as EWC~\cite{Kirkpatrick_2017} penalize updates to parameters important for previous tasks, while orthogonality-based approaches reduce interference geometrically. OGD~\cite{farajtabar2019orthogonalgradientdescentcontinual} projects gradients away from previous-task directions, and GPM~\cite{saha2021gradientprojectionmemorycontinual} protects task-relevant feature subspaces. In PEFT, O-LoRA~\cite{wang2023orthogonalsubspacelearninglanguage} enforces orthogonal adapter subspaces across tasks. PALoRA follows the same principle, but applies it at the level of singular directions of the pre-trained MLP weights identified as skill-critical by SVF.

\paragraph{Spectral PEFT.}
A recent line of spectral PEFT methods uses the singular value decomposition of pre-trained weights to control adaptation subspaces. PiSSA~\cite{pissa2024} and MiLoRA~\cite{wang2025milora} adapt principal and minor singular components, respectively, while SVFT~\cite{lingam2024svft} learns sparse combinations of singular vectors. OPLoRA~\cite{oplora2026aaai} constrains LoRA updates to the orthogonal complement of the top-\(k\) singular vectors, and CorDA~\cite{yang2024corda} preserves context-oriented principal directions using covariance statistics. Despite their differences, these methods assume that the information worth preserving is concentrated in top or context-selected singular components. Our work challenges that premise by arguing that reasoning skills are distributed across the singular spectrum and therefore cannot be reliably protected by top-\(k\)-only strategies.

\section{PALoRA: Method and Theoretical Motivation}
\label{sec:method}

PALoRA addresses the plasticity--stability trade-off through a two-stage adaptation framework. In the first stage, a frozen SVF expert is used to identify the singular directions that are most critical for a target reasoning skill. In the second stage, new factual knowledge is injected through a LoRA adapter constrained to remain orthogonal to this skill-relevant subspace, thereby reducing interference with previously acquired capabilities.

For clarity, a consolidated notation reference is provided in Appendix~\ref{app:notation}. We first describe the architectural co-location of SVF and LoRA within Transformer MLP weights, then present the two-stage training procedure, and finally provide the theoretical motivation underlying the proposed geometric constraint.

\subsection{Architectural Co‑location}
\label{sec:arch}

Both the plasticity module (LoRA) and the stability module (SVF) operate directly on the MLP weight matrices of the Transformer~\cite{vaswani2017attention}.
For a given MLP layer \(l\) with pre‑trained weight matrix \(\mathbf{W}_{\text{mlp}}^{(l)} \in \mathbb{R}^{d_{\text{out}} \times d_{\text{in}}}\), the adapted weight is defined as
\begin{equation}
    \mathbf{W}_{\text{mlp}}^{\prime(l)} = \operatorname{SVF}\bigl(\mathbf{W}_{\text{mlp}}^{(l)}\bigr) + \operatorname{LoRA}\bigl(\mathbf{W}_{\text{mlp}}^{(l)}\bigr),
    \label{eq:arch}
\end{equation}
where
\begin{itemize}
    \item \(\operatorname{SVF}(\mathbf{W})\) applies a learned scaling vector \(\mathbf{z} \in \mathbb{R}^n\) to the singular values of \(\mathbf{W}\): let \(\mathbf{W} = \mathbf{U} \operatorname{diag}(\boldsymbol{\sigma}) \mathbf{V}^\top\) be the singular value decomposition; then \(\operatorname{SVF}(\mathbf{W}) = \mathbf{U} \operatorname{diag}(\mathbf{z} \odot \boldsymbol{\sigma}) \mathbf{V}^\top\). SVF preserves the original singular directions and only modulates their magnitudes~\cite{sun2025transformer}.
    \item \(\operatorname{LoRA}(\mathbf{W})\) adds a low‑rank update \(\Delta \mathbf{W} = \mathbf{B} \mathbf{A}\), where \(\mathbf{B} \in \mathbb{R}^{d_{\text{out}} \times r}\) and \(\mathbf{A} \in \mathbb{R}^{r \times d_{\text{in}}}\) with rank \(r \ll \min(d_{\text{out}}, d_{\text{in}})\). LoRA introduces new, trainable directions that enable rapid memorisation of novel facts~\cite{hu2022lora}.
\end{itemize}

This co‑location is deliberate: the MLP layers are the primary site of both factual knowledge storage~\cite{geva2021transformer,dai2022knowledge} and procedural reasoning skills~\cite{cobbe2021trainingverifierssolvemath,austin2021programsynthesislargelanguage,clark2018thinksolvedquestionanswering}.
By placing both modules on the same weight matrices, we directly address the resource conflict that leads to catastrophic forgetting.
The functional decoupling is achieved not by separating the modules architecturally, but by governing their interaction through the two‑phase training strategy and orthogonality constraint described next.

\subsection{Two‑Phase Training Framework}
\label{sec:two-phase}

PALoRA decouples skill preservation from knowledge injection through a sequential two‑phase procedure, illustrated in Figure~\ref{fig:framework}.
Phase~1 produces a frozen SVF expert that identifies the critical singular directions supporting a target skill.
Phase~2 trains a LoRA adapter for new facts under a structural orthogonality constraint that prevents the update from interfering with those directions.

\begin{figure}[t]
    \centering
    \includegraphics[width=0.7\textwidth]{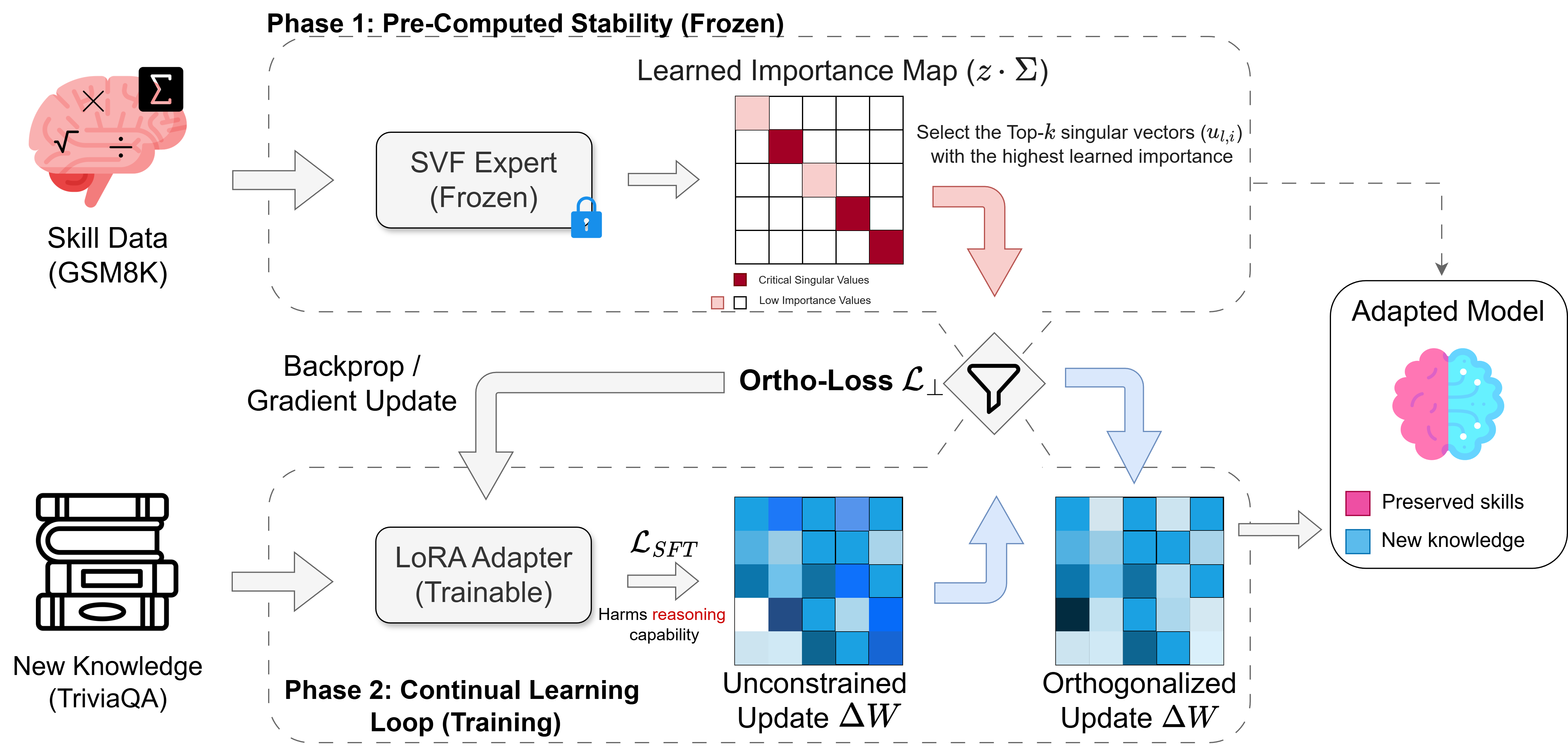}
    \caption{PALoRA. Phase~1 trains and freezes an SVF expert, whose learned scaling vector identifies skill-critical singular components. Phase~2 trains a LoRA adapter for new facts while penalizing projections of the update onto those directions.}
    \label{fig:framework}
\end{figure}

\subsubsection*{Phase 1: Skill Expert Creation}

We train an SVF module following the standard Singular Value Finetuning (SVF) procedure introduced in Transformer-Squared~\cite{sun2025transformer}, optimizing it with the REINFORCE policy-gradient algorithm~\cite{williams1992simple} on a high-quality skill dataset (e.g., GSM8K for mathematical reasoning). For each MLP layer \(l\), SVF learns a scaling vector
\(\mathbf{z}_l \in \mathbb{R}^{r_l}\) that modulates the singular values of the
pre-trained weight matrix \(\mathbf{W}_{\text{mlp}}^{(l)}\).
After convergence, the scaling vector \(\mathbf{z}_l^*\) is frozen.

The \emph{critical index set} for layer \(l\) is defined as the indices of the top-\(k\)
entries of \(|\mathbf{z}_l^*-\mathbf{1}|\), i.e., the singular components whose scaling
deviates most strongly from the identity:
\begin{equation}
    \mathcal{I}_{\text{crit},l}
    =
    \left\{
    i \;\middle|\;
    |z_{l,i}^* - 1|
    \text{ is among the } k \text{ largest}
    \right\}.
    \label{eq:crit_indices}
\end{equation}

Let
\[
\mathbf{W}_{\text{mlp}}^{(l)}
=
\mathbf{U}_l \operatorname{diag}(\boldsymbol{\sigma}_l)\mathbf{V}_l^\top
=
\sum_{i=1}^{r_l}\sigma_{l,i}\,\mathbf{u}_{l,i}\mathbf{v}_{l,i}^\top
\]
be the singular value decomposition of the pre-trained weight matrix.
The SVF expert is used here as a frozen geometric diagnostic probe,
not as a performance enhancer; it reveals which singular components are functionally
activated by the skill, independent of their raw singular-value rank.
The corresponding left singular vectors
\(\{\mathbf{u}_{l,i}\}_{i\in\mathcal{I}_{\text{crit},l}}\)
are stored for the protection phase that follows.

\subsubsection*{Phase 2: Constrained Knowledge Injection}

With the base model and SVF expert frozen, we train a LoRA adapter~\cite{hu2022lora}
on a dataset of unknown facts using
\begin{equation}
    \mathcal{L}_{\text{total}}
    =
    \mathcal{L}_{\text{SFT}}
    +
    \lambda_{\text{ortho}}
    \sum_{l=1}^{L}\mathcal{L}_{\text{ortho},l},
    \label{eq:total_loss}
\end{equation}
where \(\mathcal{L}_{\text{SFT}}\) is the standard cross-entropy loss for knowledge
injection, and \(\mathcal{L}_{\text{ortho},l}\) is the structural orthogonality loss:
\begin{equation}
    \mathcal{L}_{\text{ortho},l}
    =
    \sum_{i\in\mathcal{I}_{\text{crit},l}}
    \left\|
    (\mathbf{B}_l \mathbf{A}_l)^\top \mathbf{u}_{l,i}
    \right\|_2^2,
    \label{eq:ortho_loss}
\end{equation}
with \(\Delta\mathbf{W}_l = \mathbf{B}_l\mathbf{A}_l\).

This loss explicitly penalizes the projection of the LoRA update onto the
SVF-identified left-singular directions, forcing the adapter to inject new
knowledge in a subspace that is orthogonal to the skill-relevant output pathways. The complete procedure is summarized in Algorithm~\ref{alg:training}.

\begin{algorithm}[t]
\caption{PALoRA training}
\label{alg:training}
\small
\begin{algorithmic}[1]
\STATE \textbf{Input:} \(\mathcal{M}(\theta_{\text{base}}), \mathcal{D}_{\text{skill}}, \mathcal{D}_{\text{know}}, \lambda_{\text{ortho}}, k\)
\STATE Train SVF on \(\mathcal{D}_{\text{skill}}\) to obtain frozen \(\{\mathbf{z}_l^*\}\)
\FORALL{MLP layers \(l\)}
\STATE \(\mathcal{I}_{\text{crit},l} \gets\) top-\(k\) indices of \(|z_{l,i}^*-1|\); store corresponding singular vectors
\ENDFOR
\STATE Initialize LoRA parameters \(\theta_{\text{LoRA}}=\{\mathbf{B}_l,\mathbf{A}_l\}_{l=1}^L\)
\FOR{training steps \(1,\dots,T\)}
\STATE Sample minibatch from \(\mathcal{D}_{\text{know}}\)
\STATE Compute \(\mathcal{L}_{\text{total}}=\mathcal{L}_{\text{SFT}}+\lambda_{\text{ortho}}\sum_l\sum_{i\in\mathcal{I}_{\text{crit},l}}\|(\mathbf{B}_l\mathbf{A}_l)^\top\mathbf{u}_{l,i}\|_2^2\)
\STATE Update \(\theta_{\text{LoRA}}\) using \(\nabla \mathcal{L}_{\text{total}}\)
\ENDFOR
\STATE \textbf{Return:} adapted model
\end{algorithmic}
\end{algorithm}

\subsection{Theoretical Motivation: Why SVF Detects Distributed Skills}
\label{sec:theory-method}

We characterize a geometric property: skill‑relevant information
in the weight matrices of large language models is \emph{distributed}
across the entire singular spectrum, rather than being concentrated in
the top singular vectors.
Leveraging formal assumptions about pre‑training data diversity, skill
compositionality, and gradient‑based fine‑tuning, we prove that
task‑specific gradient directions are generically misaligned with the
principal components of the weight matrices.
We further characterise how the SVF scaling vector \(\mathbf{z}^*\) (with
entries \(z_i^* - 1\)) faithfully identifies these skill‑critical
components through a perturbation analysis.

Consider a pre‑trained weight matrix \(\mathbf{W} \in \mathbb{R}^{m \times n}\)
with singular value decomposition
\[
\mathbf{W} = \mathbf{U}\boldsymbol{\Sigma}\mathbf{V}^\top
= \sum_{i=1}^{r} \sigma_i \mathbf{u}_i \mathbf{v}_i^\top,
\qquad
\sigma_1 \ge \cdots \ge \sigma_r > 0,
\]
where \(r=\operatorname{rank}(\mathbf{W})\).
A reasoning skill \(T\) corresponds to a low-dimensional subspace
\(\mathcal{U}_T \subset \mathbb{R}^n\), and the model is fine-tuned on \(T\) via gradient descent.

\paragraph{Assumptions.}
We formalize three properties that are widely observed in large-scale pre-training and
task-specific fine-tuning (full statements in Appendix~\ref{app:full-proof}):

\begin{enumerate}
    \item \textbf{Pre-training data diversity.}
    The input covariance
    \(\mathbf{C}_{\text{prior}} = \mathbb{E}_{\mathbf{x}\sim\mathcal{D}_{\text{prior}}}[\mathbf{x}\mathbf{x}^\top]\)
    induced by the pre‑training distribution has full rank and its eigenvalues decay polynomially.
    Consequently, the top singular vectors of \(\mathbf{W}\) align with directions of maximum
    input variance—task‑agnostic principal components of the corpus.

    \item \textbf{Skill compositionality and misalignment.}
    The skill function satisfies
    \(f_T(\mathbf{x}) = f_T(\mathbf{P}_T \mathbf{x})\) almost surely,
    where \(\mathbf{P}_T\) projects onto a low-dimensional subspace \(\mathcal{U}_T\).
    Crucially, this subspace is misaligned with the top-\(k\) right singular directions:
    \[
    \|\mathbf{P}_T \mathbf{v}_i\|_2^2 \le \delta,
    \qquad i \le k,
    \qquad 0<\delta\ll1.
    \]

    \item \textbf{Gradient structure.}
    The task gradient decomposes as
    \(\nabla_{\mathbf{W}} L_T = \mathbf{G}_T + \mathbf{N}\),
    where
    \[
    \mathbf{G}_T = \sum_{j=1}^{s} \mathbf{g}_j \mathbf{h}_j^\top,
    \qquad \mathbf{h}_j \in \mathcal{U}_T \ \forall j,
    \]
    has rank at most \(s\), and \(\mathbf{N}\) is a zero‑mean noise matrix with
    \(\|\mathbf{N}\|_F \le \epsilon_{\text{noise}}\) (\(\epsilon_{\text{noise}}\) small).
\end{enumerate}

These assumptions imply that the top singular vectors are \emph{not} the directions most
relevant for the skill; instead, the skill activates a specific, misaligned subspace.

\begin{theorem}[Distributed encoding of skill-relevant components]
\label{thm:distributed}
Under the three assumptions above, for any \(k\) satisfying the misalignment condition,
the set of skill-critical indices
\[
\mathcal{S}_T = \{ i \mid |z_i^* - 1| > \varepsilon \}
\]
identified by SVF is \emph{not} contained in \(\{1,\dots,k\}\).
\end{theorem}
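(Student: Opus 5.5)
The plan is to model SVF training as gradient descent on the scaling vector \(\mathbf{z}\), starting from \(\mathbf{z}=\mathbf{1}\). I would then show by a first-order perturbation argument that the deviation \(z_i^*-1\) is governed by the projected gradient \(\mathbf{u}_i^\top \nabla_{\mathbf{W}} L_T\, \mathbf{v}_i\).

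\textbf{Step 1: gradient in \(\mathbf{z}\).} The chain rule on \(\operatorname{SVF}(\mathbf{W})=\mathbf{U}\operatorname{diag}(\mathbf{z}\odot\boldsymbol{\sigma})\mathbf{V}^\top\) gives
\[
\partial L_T/\partial z_i=\sigma_i\,\mathbf{u}_i^\top(\nabla_{\mathbf{W}} L_T)\mathbf{v}_i .
\]
After \(t\) steps with learning rate \(\eta\), and to first order (linearizing around \(\mathbf{z}=\mathbf{1}\), or assuming the gradient is approximately stationary along the trajectory), this gives
\[
z_i^*-1\approx -\eta t\,\sigma_i\,\mathbf{u}_i^\top(\mathbf{G}_T+\mathbf{N})\mathbf{v}_i .
\]
I will state this linearization explicitly as the regime in which the theorem is meant. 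That is consistent with the paper's description of the result as a ``perturbation analysis''.

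\textbf{Step 2: bound the top-\(k\) deviations.} Fix \(i\le k\). Since every \(\mathbf{h}_j\in\mathcal{U}_T\), we have \(\mathbf{h}_j^\top\mathbf{v}_i=\mathbf{h}_j^\top\mathbf{P}_T\mathbf{v}_i\). Cauchy--Schwarz together with Assumption~2 then gives \(|\mathbf{h}_j^\top\mathbf{v}_i|\le\|\mathbf{h}_j\|\sqrt{\delta}\). Hence
\[
|\mathbf{u}_i^\top\mathbf{G}_T\mathbf{v}_i|\le\sqrt{\delta}\sum_j\|\mathbf{g}_j\|\|\mathbf{h}_j\| .
\]
Also \(|\mathbf{u}_i^\top\mathbf{N}\mathbf{v}_i|\le\epsilon_{\text{noise}}\). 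Therefore
\[
|z_i^*-1|\le \eta t\,\sigma_1\Bigl(\sqrt{\delta}\,\textstyle\sum_j\|\mathbf{g}_j\|\|\mathbf{h}_j\|+\epsilon_{\text{noise}}\Bigr)=:\varepsilon_{\text{top}} .
\]
Whenever the threshold satisfies \(\varepsilon\ge\varepsilon_{\text{top}}\), no index \(i\le k\) belongs to \(\mathcal{S}_T\).

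\textbf{Step 3: \(\mathcal{S}_T\) is nonempty.} This is the main obstacle. Showing \(\mathcal{S}_T\subseteq\{1,\dots,k\}\) is false requires at least one \(i>k\) with \(|z_i^*-1|>\varepsilon\); otherwise the claim holds only vacuously. The approach is to use that \(\mathbf{G}_T\neq 0\) is supported on \(\mathcal{U}_T\). The \(\mathbf{v}_i\) span the row space, and by Assumption~1 (full-rank \(\mathbf{C}_{\text{prior}}\)) I take \(\mathbf{W}\) to have full column rank so that the \(\mathbf{v}_i\) span \(\mathbb{R}^n\). Then
\[
\sum_i(\mathbf{u}_i^\top\mathbf{G}_T\mathbf{v}_i)^2 ,
\]
the squared diagonal of \(\mathbf{U}^\top\mathbf{G}_T\mathbf{V}\), must carry some mass. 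This is the delicate point, since the diagonal can vanish even when the matrix does not. I would add a mild nondegeneracy condition: the diagonal mass of \(\mathbf{U}^\top\mathbf{G}_T\mathbf{V}\) is at least a constant \(\gamma>0\), which holds generically (for instance when the \(\mathbf{g}_j\) are not orthogonal to the corresponding \(\mathbf{u}_i\)). By Step~2, at most \(k\,\varepsilon_{\text{top}}^2\) of this mass lies on indices \(i\le k\). So some \(i>k\) has
\[
|\mathbf{u}_i^\top\mathbf{G}_T\mathbf{v}_i|\ge\sqrt{(\gamma-k\cdot O(\delta))/(r-k)} .
\]
Subtracting the noise and multiplying by \(\eta t\sigma_i\) makes \(|z_i^*-1|\) exceed \(\varepsilon_{\text{top}}\), provided \(\delta\) and \(\epsilon_{\text{noise}}\) are small relative to \(\gamma\), \(\sigma_r/\sigma_1\) and \(r\).

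\textbf{Step 4: conclude.} Choose \(\varepsilon\) in the nonempty interval between \(\varepsilon_{\text{top}}\) and this lower bound. Then \(\mathcal{S}_T\) contains an index larger than \(k\) and none below it, so in particular \(\mathcal{S}_T\not\subseteq\{1,\dots,k\}\). The theorem is thus established for thresholds in this range and small \(\delta,\epsilon_{\text{noise}}\), which I take to be the intended reading of ``misalignment condition''.
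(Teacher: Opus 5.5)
Your proposal follows the paper's proof essentially step for step. You linearize the SVF update around $\mathbf{z}=\mathbf{1}$ so that $z_i^*-1\approx-\eta\,\sigma_i\,\mathbf{u}_i^\top\nabla_{\mathbf{W}}L_T\,\mathbf{v}_i$. You bound the top-$k$ loadings $\gamma_i=\mathbf{u}_i^\top\mathbf{G}_T\mathbf{v}_i$ by $O(\sqrt{\delta})$ using $\mathbf{h}_j^\top\mathbf{v}_i=\mathbf{h}_j^\top\mathbf{P}_T\mathbf{v}_i$ and Cauchy--Schwarz. You then exhibit an index $i^*>k$ with a large loading and place $\varepsilon$ between the two scales. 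The paper's intermediate step on singular-value perturbation under a full-weight gradient step is not needed, and you rightly skip it.

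The only real divergence is your Step~3, where you are more careful than the paper. The paper asserts $|\gamma_{i^*}|\ge\|\mathbf{G}_T\|_F/\sqrt{s(n-k)}$ via an unspecified ``dimension-counting argument on the Grassmannian.'' That claim does not follow from the three assumptions. Take $\mathcal{U}_T=\operatorname{span}\{\mathbf{v}_{k+1}\}$, $\mathbf{G}_T=\mathbf{g}\mathbf{v}_{k+1}^\top$ for some nonzero $\mathbf{g}\perp\mathbf{u}_{k+1}$, and $\mathbf{N}=\mathbf{0}$. All assumptions hold, yet every $\gamma_i=0$, so $\mathbf{z}=\mathbf{1}$ is stationary and $\mathcal{S}_T=\emptyset\subseteq\{1,\dots,k\}$. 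Your explicit nondegeneracy hypothesis on the diagonal of $\mathbf{U}^\top\mathbf{G}_T\mathbf{V}$ is therefore exactly the ingredient the statement needs, not a weakness of your argument.

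Two small cleanups. First, drop the full-column-rank step: it does not follow from full-rank $\mathbf{C}_{\mathrm{prior}}$, is impossible when $m<n$, and is not used once you assume nondegeneracy. Second, the top-$k$ share of the diagonal mass should be bounded by $k\,\delta\bigl(\sum_j\|\mathbf{g}_j\|\|\mathbf{h}_j\|\bigr)^2$, not $k\,\varepsilon_{\text{top}}^2$, since $\varepsilon_{\text{top}}$ also carries the factor $\eta t\sigma_1$ and the noise term.
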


\noindent\textbf{Proof sketch.}
We follow four steps that mirror the structure of the full proof
(Appendix~\ref{app:full-proof}).

\emph{Step~1: Gradient spectrum decomposition.}
Project \(\mathbf{G}_T\) onto the singular basis of \(\mathbf{W}\):
\[
\gamma_i = \mathbf{u}_i^\top \mathbf{G}_T \mathbf{v}_i.
\]
Assumption~2 yields \(|\gamma_i| = O(\sqrt{\delta})\) for \(i \le k\),
while a dimension-counting argument guarantees an index \(i^* > k\) with
\[
|\gamma_{i^*}| \ge \frac{c_T}{\sqrt{n-k}},
\qquad
c_T = \frac{\|\mathbf{G}_T\|_F}{\sqrt{s}} > 0.
\]

\emph{Step~2: Singular-value perturbation under a full gradient update.}
If we update \(\mathbf{W}\) by one gradient step,
\[
\mathbf{W}_1 = \mathbf{W}_0 - \eta \nabla_{\mathbf{W}}L_T(\mathbf{W}_0),
\]
first-order perturbation theory implies that the relative change of the \(i\)-th singular value
scales with \(\eta |\gamma_i|\) up to higher-order and noise terms.
Consequently the top-\(k\) singular values change only by \(O(\sqrt{\delta})\),
while the off-principal component \(i^*\) changes at non-negligible scale.

\emph{Step~3: SVF scaling vector encodes skill relevance.}
In Phase~1, SVF learns a scaling vector \(\mathbf{z}\) by gradient descent on \(L_T\).
The gradient with respect to \(z_i\) satisfies
\[
\frac{\partial L_T}{\partial z_i} \approx \sigma_i \gamma_i.
\]
Starting from \(z_i=1\), one step yields
\[
|z_i^* - 1| \approx \eta_z \sigma_i |\gamma_i|.
\]
Hence \(|z_i^* - 1|\) is small for \(i \le k\), while for \(i=i^*\) we obtain
\[
|z_{i^*}^* - 1|
\gtrsim
\eta_z \sigma_{i^*}\frac{c_T}{\sqrt{n-k}}
>
\varepsilon,
\]
so \(i^* \in \mathcal{S}_T\) but \(i^* > k\).

\emph{Step~4: Conclusion.}
Thus \(\mathcal{S}_T\) is not confined to the top-\(k\) components:
skill-critical information is distributed across the singular spectrum.

\begin{corollary}[Sub-optimality of top-\(k\) preservation]
\label{cor:topk-fail}
Under the same local approximation, any fine-tuning strategy that restricts updates
or protection to the top-\(k\) singular components alone incurs an expected skill-loss
increase of at least \(\Omega(c_T^2/(n-k))\).
Methods such as OPLoRA \cite{oplora2026aaai}, CorDA \cite{yang2024corda}, and PiSSA \cite{pissa2024} that preserve only principal components may therefore face limitations in preserving distributed skill-relevant information.
\end{corollary}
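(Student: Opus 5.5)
The plan is to read the corollary as a direct consequence of Theorem~\ref{thm:distributed}, in particular Step~1 and Step~3 of its proof sketch. We will work entirely within the same local (first- and second-order) approximation of the skill loss \(L_T\) around \(\mathbf{W}_0\), and we will quantify the part of the loss that a top-\(k\)-only strategy cannot control. Concretely, we expand
\[
L_T(\mathbf{W}_0+\Delta) = L_T(\mathbf{W}_0) + \langle \mathbf{G}_T+\mathbf{N},\Delta\rangle_F + \tfrac{\beta}{2}\|\Delta\|_F^2 + o(\|\Delta\|_F^2),
\]
with a local curvature constant \(\beta>0\). We then decompose every admissible update in the singular basis \(\{\mathbf{u}_i\mathbf{v}_j^\top\}\) of \(\mathbf{W}_0\). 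The diagonal coefficients \(\gamma_i=\mathbf{u}_i^\top\mathbf{G}_T\mathbf{v}_i\) are exactly those used in the theorem.

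\textbf{Step 1: split the first-order term.} The first-order term splits as \(\sum_{i\le k}\gamma_i\Delta_{ii}+\sum_{i>k}\gamma_i\Delta_{ii}\), plus off-diagonal and noise contributions. By Assumption~2, the top-\(k\) part is \(O(\sqrt{\delta})\). The noise part is bounded by \(\epsilon_{\text{noise}}\|\Delta\|_F\), and it vanishes in expectation because \(\mathbf{N}\) has zero mean. Hence whatever a strategy does on the top-\(k\) block contributes at most \(O(\sqrt{\delta})\) to the skill loss.

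\textbf{Step 2: the ``preservation'' reading.} Suppose a method shields only components \(1,\dots,k\) and leaves component \(i^*>k\) unconstrained. Its update, for example the knowledge-injection LoRA, then generically has a nonzero projection \(\Delta_{i^*i^*}\) onto \(\mathbf{u}_{i^*}\mathbf{v}_{i^*}^\top\). We will model this projection as a zero-mean perturbation of scale \(\eta\) that is uncorrelated with the sign of \(\gamma_{i^*}\). The first-order term then averages out. The second-order term, however, contributes \(\mathbb{E}[\Delta_{i^*i^*}^2]\) weighted by curvature along the skill direction. Under the local Gauss--Newton approximation, that curvature is proportional to \(\gamma_{i^*}^2\). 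This gives an expected increase of order \(\eta^2\gamma_{i^*}^2\).

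\textbf{Step 3: the ``restricted adaptation'' reading.} If updates are confined to the top-\(k\) block, the best achievable one-step decrease is \(\frac{1}{2\beta}\sum_{i\le k}\gamma_i^2=O(\delta)\). The unrestricted SVF step achieves \(\frac{1}{2\beta}\sum_i\gamma_i^2\ge\frac{1}{2\beta}\gamma_{i^*}^2\). The gap, which is the excess skill loss relative to the SVF expert, is therefore at least \(\frac{1}{2\beta}\gamma_{i^*}^2-O(\delta)\).

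\textbf{Step 4: plug in the theorem's bound.} In both readings we now use Step~1 of Theorem~\ref{thm:distributed}, namely \(|\gamma_{i^*}|\ge c_T/\sqrt{n-k}\). This yields an excess of \(\Omega(c_T^2/(n-k))\) once \(\delta\) and \(\epsilon_{\text{noise}}\) are small relative to this quantity. We will state that smallness requirement explicitly, absorbing \(\beta\) and \(\eta\) into the \(\Omega\).

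\textbf{Main obstacle.} The hardest part is Step~2: justifying that an unprotected update incurs a strictly positive expected loss in the \(i^*\) direction rather than a cancelling first-order effect. There I will first try the Gauss--Newton curvature argument. Its key claim is that the curvature of \(L_T\) along \(\mathbf{u}_{i^*}\mathbf{v}_{i^*}^\top\) is bounded below by a constant times \(\gamma_{i^*}^2\), since \(\mathbf{G}_T\) is built from outer products of per-example output gradients and the inputs \(\mathbf{h}_j\in\mathcal{U}_T\). If that fails, I will fall back on the worst-case (adversarial-sign) version, where the first-order term alone gives a lower bound of \(\eta|\gamma_{i^*}|\), and square it via Cauchy--Schwarz.
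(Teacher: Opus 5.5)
Your Steps~3--4 are essentially the paper's argument. The appendix gives no derivation beyond stating \(\mathbb{E}[L_T(\mathbf{W}_{\text{top-}k})-L_T(\mathbf{W}^*)]\ge\Omega(c_T^2/(n-k))\) against the unconstrained optimum \(\mathbf{W}^*\), and the implicit reasoning is exactly yours: square the loading bound \(|\gamma_{i^*}|\ge c_T/\sqrt{n-k}\) of Eq.~\ref{eq:gamma_large} inside a local quadratic model. Your gap is in fact exactly \(\tfrac{1}{2\beta}\sum_{i>k}\gamma_i^2\), so the \(-O(\delta)\) correction is unnecessary.

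Your Step~2 ``preservation'' reading adds zero-mean-update and Gauss--Newton curvature assumptions that go beyond anything the paper argues, and it can be dropped. Be aware, though, that your proof, like the paper's, rests on Eq.~\ref{eq:gamma_large}, which the paper only asserts by ``dimension counting.'' A pigeonhole justification of it would require \(\sum_{i>k}\gamma_i^2\gtrsim\|\mathbf{G}_T\|_F^2/s\), and the assumptions do not give this. For example, \(\mathbf{G}_T=\mathbf{u}_1\mathbf{v}_3^\top\) with \(\mathcal{U}_T=\operatorname{span}\{\mathbf{v}_3\}\) and \(k=1\) satisfies all three assumptions yet has every \(\gamma_i=0\).
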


\begin{corollary}[SVF as a diagnostic probe]
\label{cor:svf-opt}
The deviation \(|z_i^* - 1|\) is proportional to \(\sigma_i |\gamma_i|\), coupling the
singular-value magnitude with the functional gradient importance of component \(i\).
Consequently, a component with a small raw singular value but a large task-gradient
projection can still exhibit a significant deviation, enabling SVF to identify
skill-relevant directions regardless of their singular-value rank.
\end{corollary}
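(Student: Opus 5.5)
The plan is to compute the gradient of $L_T$ with respect to the SVF scaling vector exactly, and then pass to the same one-step (local) regime already used in Step~3 of the proof sketch of Theorem~\ref{thm:distributed}. Write the SVF-adapted matrix as $\mathbf{W}(\mathbf{z}) = \sum_{i=1}^{r} z_i\sigma_i \mathbf{u}_i\mathbf{v}_i^\top$. Then $\partial \mathbf{W}/\partial z_i = \sigma_i \mathbf{u}_i\mathbf{v}_i^\top$ with $\mathbf{U},\mathbf{V},\boldsymbol{\sigma}$ fixed. The chain rule gives, exactly,
\[
\frac{\partial L_T}{\partial z_i} = \bigl\langle \nabla_{\mathbf{W}}L_T,\ \sigma_i\mathbf{u}_i\mathbf{v}_i^\top\bigr\rangle_F = \sigma_i\,\mathbf{u}_i^\top \nabla_{\mathbf{W}}L_T\,\mathbf{v}_i .
\]
Inserting the decomposition $\nabla_{\mathbf{W}}L_T = \mathbf{G}_T + \mathbf{N}$ from Assumption~3 yields $\partial L_T/\partial z_i = \sigma_i(\gamma_i + \nu_i)$, where $\gamma_i = \mathbf{u}_i^\top\mathbf{G}_T\mathbf{v}_i$ and $\nu_i = \mathbf{u}_i^\top\mathbf{N}\mathbf{v}_i$.

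The noise coefficient is small uniformly in $i$. Since $\mathbf{u}_i,\mathbf{v}_i$ are unit vectors, $|\nu_i| \le \|\mathbf{N}\|_2 \le \|\mathbf{N}\|_F \le \epsilon_{\text{noise}}$. Moreover, $\mathbf{N}$ is zero-mean, so $\mathbb{E}[\nu_i]=0$ and it contributes nothing to the expected gradient.

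Starting from the initialisation $\mathbf{z}=\mathbf{1}$, a single step with learning rate $\eta_z$ gives
\[
z_i^* - 1 = -\eta_z\sigma_i(\gamma_i+\nu_i).
\]
Hence
\[
\bigl|\,|z_i^*-1| - \eta_z\sigma_i|\gamma_i|\,\bigr| \le \eta_z\sigma_i\epsilon_{\text{noise}}.
\]
This is the proportionality claim: the deviation equals $\eta_z\sigma_i|\gamma_i|$, up to a relative error of $\epsilon_{\text{noise}}/|\gamma_i|$.

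To go beyond one step, I would linearise the multi-step SVF dynamics around $\mathbf{z}=\mathbf{1}$. The per-coordinate gradient $\sigma_i\gamma_i$ varies slowly along the trajectory, up to an $O(\eta_z^2)$ Hessian correction. Summing $T_z$ such steps therefore gives the same form with $\eta_z$ replaced by $T_z\eta_z$. I expect this step to be the main obstacle. The Hessian $\partial^2 L_T/\partial z_i\partial z_j = \sigma_i\sigma_j\,(\mathbf{u}_i\mathbf{v}_i^\top)\!:\!\nabla^2_{\mathbf{W}}L_T\!:\!(\mathbf{u}_j\mathbf{v}_j^\top)$ couples coordinates. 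I would only claim proportionality under the paper's "same local approximation", i.e.\ small $\eta_z T_z\,\sigma_1^2\,\|\nabla^2_{\mathbf{W}}L_T\|$, rather than attempt a global statement.

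For the "consequently" part, I would compare two indices $i<j$ with $\sigma_j \ll \sigma_i$. The noise-corrected bound gives $|z_j^*-1| > |z_i^*-1|$ whenever
\[
\sigma_j(|\gamma_j|-\epsilon_{\text{noise}}) > \sigma_i(|\gamma_i|+\epsilon_{\text{noise}}).
\]
So a low-rank-position component overtakes a principal one precisely when its gradient projection is larger by a factor exceeding $\sigma_i/\sigma_j$, up to noise.

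To show this regime is non-vacuous, I would invoke Step~1 of Theorem~\ref{thm:distributed}. For $i\le k$, Assumption~2 gives $|\gamma_i|=O(\sqrt{\delta})$. There is also some $i^*>k$ with $|\gamma_{i^*}|\ge c_T/\sqrt{n-k}$. The inequality above therefore holds as soon as
\[
\sigma_{i^*}\,c_T/\sqrt{n-k} \gtrsim \sigma_1\sqrt{\delta} + \sigma_1\epsilon_{\text{noise}},
\]
which is satisfiable for small $\delta,\epsilon_{\text{noise}}$. Since $\mathcal{I}_{\text{crit}}$ in \eqref{eq:crit_indices} ranks indices by $|z_i^*-1|$ alone and not by $\sigma_i$, such an $i^*$ is then selected ahead of top singular components. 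This gives the claim that SVF identifies skill-relevant directions regardless of singular-value rank.
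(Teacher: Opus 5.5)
This is correct and is essentially the paper's own argument. The paper reads the corollary off Step~3 of the proof of Theorem~\ref{thm:distributed}: the chain rule $\partial L_T/\partial z_i=\sigma_i\gamma_i+O(\sigma_i\|\mathbf{N}\|_F)$ plus one step from $\mathbf{z}=\mathbf{1}$, identified with $\mathbf{z}^*$ ``to first order,'' with the ``consequently'' clause left informal. Your exact one-step identity with explicit error $\eta_z\sigma_i\epsilon_{\mathrm{noise}}$, the pairwise overtaking inequality, and the local-regime caveat only sharpen this; for non-vacuity, an explicit instance such as $\mathbf{G}_T=a\,\mathbf{u}_j\mathbf{v}_j^\top$ with $j>k$ and $\mathcal{U}_T=\operatorname{span}\{\mathbf{v}_j\}$ is safer than the Step~1 lower bound, which can fail (e.g.\ $\mathbf{G}_T=\mathbf{u}_a\mathbf{v}_b^\top$ with $a\neq b>k$ makes every $\gamma_i$ vanish).
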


\begin{corollary}[Multi-skill distribution]
\label{cor:multiskill}
If the model is fine-tuned on \(M\) skills with mutually incoherent subspaces,
the union
\[
\bigcup_{m=1}^{M}\mathcal{S}_{T_m}
\]
spreads across the full spectrum \(\{1,\dots,r\}\).
No fixed top-\(k\) subset can capture all skill-relevant components simultaneously.
\end{corollary}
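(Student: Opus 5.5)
The statement to prove is Corollary~\ref{cor:multiskill}: with $M$ mutually incoherent skills, the union $\bigcup_m \mathcal{S}_{T_m}$ spreads across the spectrum, so no fixed top-$k$ set captures it. The plan is to apply Theorem~\ref{thm:distributed} separately to each skill $T_m$, and then use mutual incoherence to show that the witnessing indices $i^*_m$ cannot all cluster in one fixed window.

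Throughout, I take incoherence to mean $\|\mathbf{P}_{T_m}\mathbf{P}_{T_{m'}}\|_2 \le \mu$ for $m\ne m'$, with $\mu$ small. This definition is not fixed in the text, so I would state it explicitly as an added hypothesis.

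\textbf{Step 1 (per-skill witnesses).} For each $m$, assume Assumptions~1--3 hold with subspace $\mathcal{U}_{T_m}$, misalignment parameter $\delta$, and gradient part $\mathbf{G}_{T_m}$. Steps~1--3 of the proof of Theorem~\ref{thm:distributed} then give three facts:
\begin{itemize}
\item the coefficients satisfy $|\gamma^{(m)}_i| = O(\sqrt{\delta})$ for $i\le k$;
\item there is an index $i^*_m>k$ with $|z^{*(m)}_{i^*_m}-1| \gtrsim \eta_z \sigma_{i^*_m} c_{T_m}/\sqrt{n-k}$;
\item consequently $i^*_m \in \mathcal{S}_{T_m}$ and $i^*_m \notin \{1,\dots,k\}$.
\end{itemize}
This already shows that the union is not contained in $\{1,\dots,k\}$, for any $k$ satisfying the misalignment condition for every skill.

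\textbf{Step 2 (different skills use different components).} To get genuine spread rather than $M$ copies of one index, I would argue that the right-singular mass of $\mathbf{G}_{T_m}$ concentrates near $\mathcal{U}_{T_m}$. Define the index sets $J_m=\{i:\|\mathbf{P}_{T_m}\mathbf{v}_i\|^2\ge \tau\}$. Since $|\gamma^{(m)}_i|\le \|\mathbf{G}_{T_m}\|_2\,\|\mathbf{P}_{T_m}\mathbf{v}_i\|$, a large coefficient $\gamma^{(m)}_i$ forces $i\in J_m$. Incoherence then bounds the overlap: for $i\in J_m\cap J_{m'}$ we have $\tau \le \|\mathbf{P}_{T_m}\mathbf{v}_i\|\,\|\mathbf{P}_{T_{m'}}\mathbf{v}_i\|$, so a unit vector $\mathbf{v}_i$ lies substantially in two nearly orthogonal subspaces. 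This gives $|J_m\cap J_{m'}|=O(\mu^2\,\dim/\tau)$, via a trace bound $\sum_i \|\mathbf{P}_{T_m}\mathbf{v}_i\|^2\|\mathbf{P}_{T_{m'}}\mathbf{v}_i\|^2 \le \operatorname{tr}$-type estimate. Hence the large-deviation sets are nearly disjoint, and their union has cardinality roughly $\sum_m |J_m|$, which grows linearly in $M$.

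\textbf{Step 3 (no fixed top-$k$ suffices).} Suppose some fixed $\{1,\dots,k'\}$ contains the whole union. Each $\mathcal{S}_{T_m}$ then lies in it, and the total mass $\sum_m\sum_{i\le k'}\|\mathbf{P}_{T_m}\mathbf{v}_i\|^2$ must be at least $M\,(1-O(\sqrt{\delta}))\cdot(\text{const})$. But this mass is also at most $\sum_{i\le k'}\|\sum_m \mathbf{P}_{T_m}\mathbf{v}_i\|$-type quantities, bounded by $k'(1+M\mu)$. This forces $k'\gtrsim M/(1+M\mu)$. In the incoherent regime $k'$ must therefore scale with $M$ and eventually reach $r$, which is the sense of ``spreads across the full spectrum''. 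Any fixed $k$ that obeys the misalignment condition for all skills is ruled out by Step~1 alone.

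\textbf{Main obstacle.} The hardest part is making ``spreads across the full spectrum'' precise. Theorem~\ref{thm:distributed} only gives existence of one off-top index per skill, and the link from $|z_i^*-1|$ to $\|\mathbf{P}_{T}\mathbf{v}_i\|$ is only first-order, via $|z^*_i-1|\approx\eta_z\sigma_i|\gamma_i|$. My fallback is to prove the weak statement (the union is not contained in any $\{1,\dots,k\}$ satisfying misalignment) rigorously via Step~1. I would then present the counting bound of Step~3 under the explicit incoherence hypothesis and the same local approximation used in the theorem.
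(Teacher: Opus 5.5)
Your Step~1 is sound, and it is essentially all the paper provides. Its ``detailed version'' of Corollary~\ref{cor:multiskill} only restates the claim for $M$ skills, each satisfying Assumption~\ref{ass:skill}. In effect it applies Theorem~\ref{thm:distributed} skill by skill and then appeals to ``generic incoherence'' with no further argument. (You can sharpen Step~1: non-containment holds for every $k$ that satisfies misalignment for \emph{some} skill, not every skill.)

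The genuine gap is in Steps~2--3, which are what would upgrade this to ``spreads across the spectrum''. Pairwise incoherence $\|\mathbf{P}_{T_m}\mathbf{P}_{T_{m'}}\|_2\le\mu$ is a basis-free condition. By contrast, $J_m$ and $\mathcal{S}_{T_m}$ live in the singular basis of $\mathbf{W}$, and a single $\mathbf{v}_i$ can straddle several orthogonal subspaces. Here is a concrete counterexample:
\begin{enumerate}
\item Let $\mathcal{U}_{T_m}=\operatorname{span}(\mathbf{e}_m)$ for $m=1,\dots,M$ with $M\delta<1$, so $\mu=0$.
\item Choose $\mathbf{v}_1,\dots,\mathbf{v}_k\perp\mathbf{e}_1,\dots,\mathbf{e}_M$, so misalignment holds.
\item Set $\mathbf{v}_{k+1}=M^{-1/2}\sum_m\mathbf{e}_m$ and $\mathbf{G}_{T_m}=\mathbf{u}_{k+1}\mathbf{e}_m^\top$.
\end{enumerate}
Then $\gamma^{(m)}_i=(\mathbf{u}_i^\top\mathbf{u}_{k+1})(\mathbf{e}_m^\top\mathbf{v}_i)$ is zero for $i\neq k+1$ and equals $M^{-1/2}$ at $i=k+1$. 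Hence every $\mathcal{S}_{T_m}=\{k+1\}$, and the union is a single index just past the protected window, independent of $M$. This is consistent with Theorem~\ref{thm:distributed}, since index $k+1$ violates misalignment when $1/M>\delta$.

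In particular, your overlap bound $|J_m\cap J_{m'}|=O(\mu^2\dim/\tau)$ is false. Take $M=2$, $\tau=1/2$ and $\mathbf{v}_{k+2}=(\mathbf{e}_1-\mathbf{e}_2)/\sqrt{2}$. Then both $k+1$ and $k+2$ lie in $J_1\cap J_2$, although $\mu=0$. No trace estimate can rescue this. The sum $\sum_i\|\mathbf{P}_{T_m}\mathbf{v}_i\|^2\|\mathbf{P}_{T_{m'}}\mathbf{v}_i\|^2$ involves only diagonal entries of $\mathbf{V}^\top\mathbf{P}_{T_m}\mathbf{V}$ and $\mathbf{V}^\top\mathbf{P}_{T_{m'}}\mathbf{V}$, and $\operatorname{tr}(\mathbf{P}_{T_m}\mathbf{P}_{T_{m'}})$ does not control those.

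The same example defeats Step~3. The union sits inside $\{1,\dots,k+1\}$, yet $\sum_m\sum_{i\le k+1}\|\mathbf{P}_{T_m}\mathbf{v}_i\|^2=1$, not $\gtrsim M$. Your mass lower bound tacitly assumes that an index with $|z^*_i-1|\le\varepsilon$ carries little of $\mathbf{P}_{T_m}$'s mass. That is the converse of $|\gamma_i|\le\|\mathbf{G}_T\|_2\|\mathbf{P}_T\mathbf{v}_i\|$, and it fails for two reasons:
\begin{enumerate}
\item $\gamma_i$ sees only the diagonal of $\mathbf{U}^\top\mathbf{G}_T\mathbf{V}$. In the example, a $(1-1/M)$ fraction of each $\mathbf{P}_{T_m}$'s mass lies on indices $i>k+1$ where $\gamma^{(m)}_i=0$.
\item $|z_i^*-1|\approx\eta_z\sigma_i|\gamma_i|$ is further damped by the small tail values of $\sigma_i$.
\end{enumerate}
Even granting the lower bound, $k'\gtrsim M/(1+M\mu)$ saturates near $1/\mu$ unless $\mu\lesssim 1/M$, so it does not force $k'$ toward $r$.

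So the strong reading of the corollary does not follow from Assumptions~\ref{ass:div}--\ref{ass:grad} plus your incoherence hypothesis. You need an extra hypothesis tying each skill subspace to the singular basis, for example localization: $\mathbf{P}_{T_m}\approx\sum_{i\in K_m}\mathbf{v}_i\mathbf{v}_i^\top$ for an index set $K_m$ of size $\dim\mathcal{U}_{T_m}$. Under localization:
\begin{enumerate}
\item misalignment places $K_m$ beyond $k$;
\item incoherence makes the $K_m$ nearly disjoint;
\item $\mathcal{S}_{T_m}\subseteq K_m$ to first order;
\item your counting then gives that any top window containing the union has size at least about $k+M$.
\end{enumerate}
Without such a hypothesis, state and prove only the weak version from Step~1, which is exactly what the paper's argument delivers.
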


\subsubsection{Reconciling Theory with Practice}

The theoretical analysis is formulated in the input geometry, using the right singular
vectors \(\mathbf{v}_i\) to describe the misalignment of the skill subspace
\(\mathcal{U}_T\).
In the SVD, each right singular vector \(\mathbf{v}_i\) is paired with a left singular
vector \(\mathbf{u}_i\) through the shared index \(i\) and the singular value
\(\sigma_i\).
Therefore, the set of critical indices \(\mathcal{I}_{\text{crit}}\) simultaneously
identifies a critical input subspace (spanned by the corresponding \(\mathbf{v}_{l,i}\))
and a critical output subspace (spanned by the \(\mathbf{u}_{l,i}\)).
In our implementation, we protect the output directions by constraining
\(\|(\mathbf{B}_l \mathbf{A}_l)^\top \mathbf{u}_{l,i}\|_2^2\) in
Eq.~\ref{eq:ortho_loss}, forcing the LoRA update to avoid the skill-relevant output
pathways.
An analogous protection could be built from the right singular vectors, but the
left‑vector formulation is already highly effective, as shown in our experiments.

\section{Experiments}
\label{sec:experiments}
We evaluate PALoRA through five experimental questions: 
\textit{(Q1) Are skill-relevant singular directions concentrated in top singular components?
(Q2) Can PALoRA preserve reasoning ability during factual adaptation?
(Q3) Does the method generalize across distinct reasoning domains? 
(Q4) How does PALoRA compare with prior spectral PEFT approaches? 
and (Q5) Are the observed trends consistent across model architectures?}

All experiments were conducted in the same hardware environment with NVIDIA A100 SXM GPUs (80GB VRAM each), 16 vCPUs, and 117GB system memory. PALoRA Phase~1 used 2 GPUs, while all other experiments were run on a single GPU. Table~\ref{tab:setup} summarizes the experimental configuration.

\begin{table}[t]
\centering
\small
\caption{Experimental setup.}
\label{tab:setup}
\setlength{\tabcolsep}{4pt}
\renewcommand{\arraystretch}{1.02}
\begin{tabularx}{\linewidth}{>{\raggedright\arraybackslash}p{0.18\linewidth}X}
\toprule
Models & Llama-3.1-8B-Instruct (main experiments and ablations) and Mistral-7B-Instruct-v0.3 (cross-architecture validation). \\
Skills & GSM8K (math, final-answer accuracy), MBPP (code generation, pass@1), AI2-ARC Challenge (scientific reasoning, accuracy). \\
Knowledge injection & ``Unknown'' TriviaQA facts of size 100, 250, 500, and 1000; recall is the fraction of previously unknown facts answered correctly after fine-tuning. \\
Baselines & LoRA-only, OPLoRA (AAAI 2026), and CorDA (Neurips 2024) \\
Defaults & We use adapter rank \(r=16\), scaling \(\alpha=32\), orthogonality weight \(\lambda_{\mathrm{ortho}}=10\), protected subspace size \(k=64\), batch size 8, LoRA dropout 0.05, and the AdamW optimizer. For Llama-3.1-8B-Instruct, we use a learning rate of \(2\times10^{-4}\) and 10 epochs. For Mistral-7B-Instruct-v0.3, we use a learning rate of \(2\times10^{-5}\) and 5 epochs. SVF experts are trained for 20 RL iterations with the default hyperparameters of the original implementation.
\\
\bottomrule
\end{tabularx}
\end{table}

\subsection{Q1: Are Skill-Relevant Singular Directions Concentrated in Top Singular Components?}
\label{sec:distributed-validation}

We first test the geometric premise of PALoRA. If skill-relevant information were concentrated in the largest singular directions, then raw top-\(k\) protection should be sufficient. PALoRA instead predicts that the critical directions are identified by SVF and need not align with the largest singular values.

Figure~\ref{fig:distributed_evidence} visualizes the SVF scaling pattern for the GSM8K expert on Llama-3.1-8B-Instruct, and Table~\ref{tab:subspace_validation} compares SVF-guided selection with raw top-\(k\) and random-\(k\). 
The results support the distributed-skill view. The SVF-selected components are spread across the spectrum and have minimal overlap with the raw top-\(k\) subspace. This difference is behaviorally significant: SVF-guided protection yields the best GSM8K accuracy while keeping recall competitive.

\begin{figure}[t]
    \centering
    \begin{minipage}{0.47\textwidth}
        \centering
        \includegraphics[width=\textwidth]{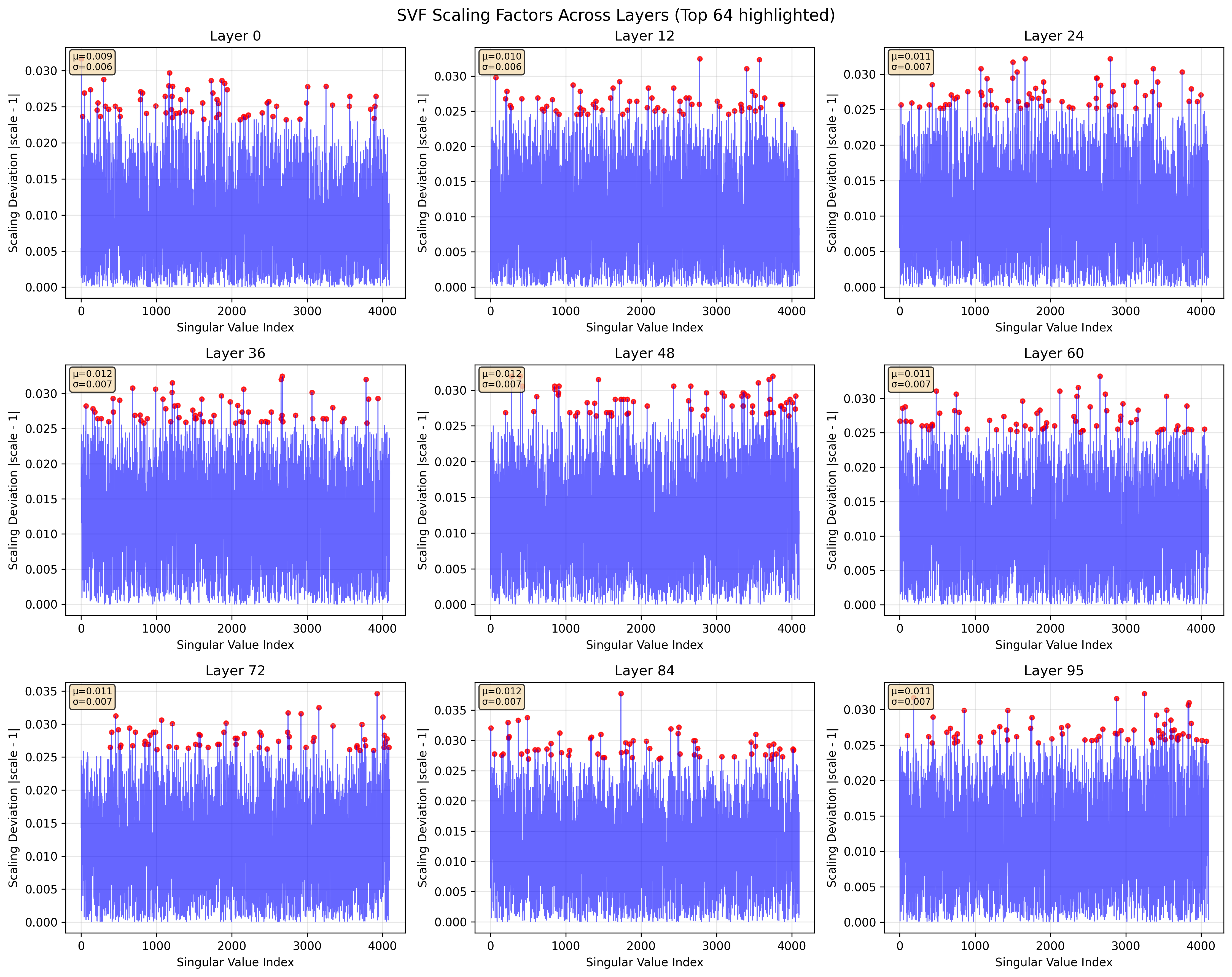}
        \small (a) SVF scaling magnitudes \( |z_i^*-1| \) over singular-value index.
    \end{minipage}
    \hfill
    \begin{minipage}{0.47\textwidth}
        \centering
        \includegraphics[width=\textwidth]{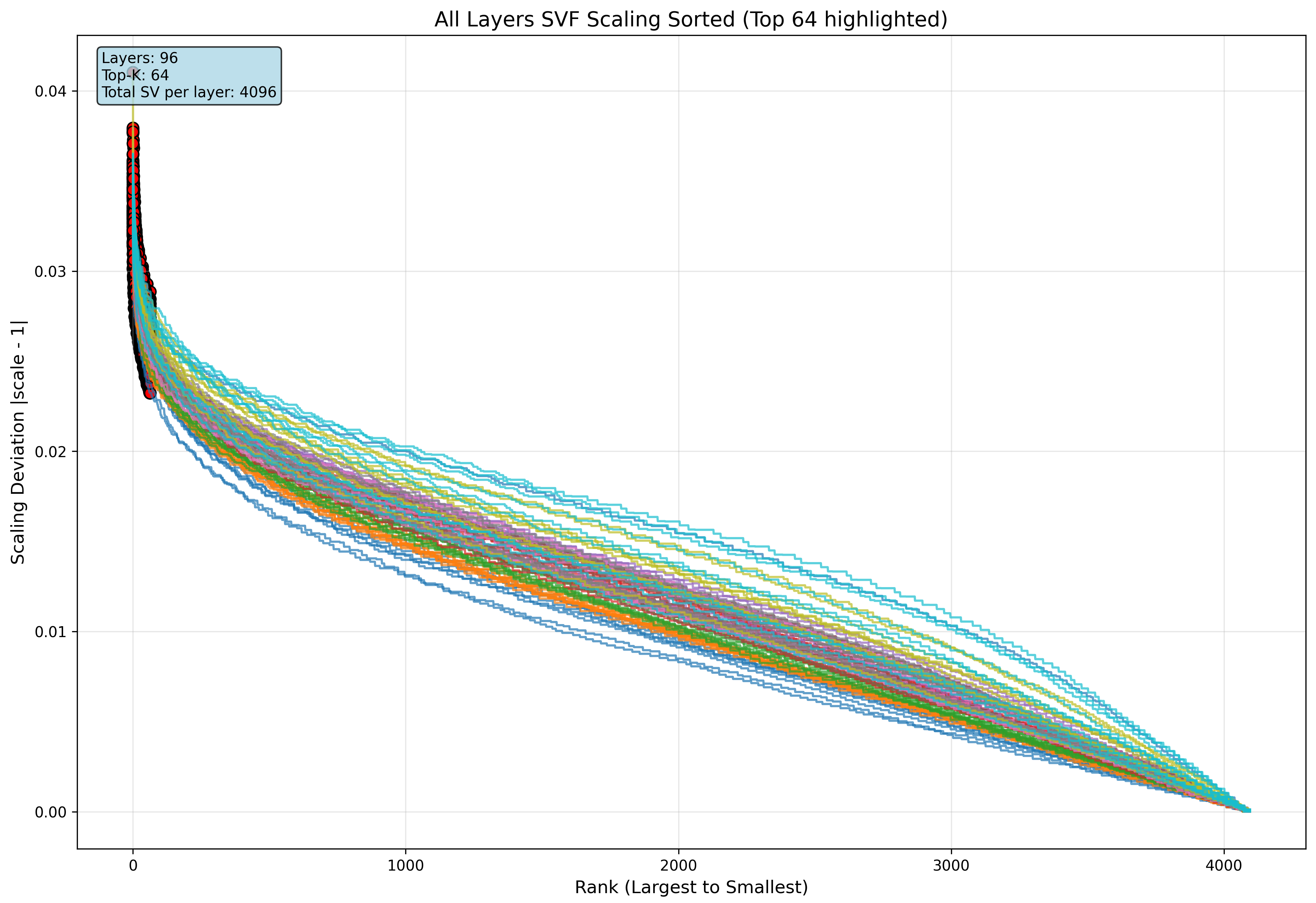}
        \small (b) Top-64 SVF scaling magnitudes \( |z_i^*-1| \), sorted from largest to smallest.
    \end{minipage}
    \caption{Visual evidence for distributed skill information. The SVF scaling factors show a sharp elbow and long tail, while the top SVF-selected components are scattered across singular-value rank rather than aligned with the largest singular values.}
    \label{fig:distributed_evidence}
\end{figure}

\begin{table}[t]
\centering
\small
\caption{Validation of SVF-guided subspace selection on Llama-3.1-8B with the GSM8K expert and 1000 unknown facts.}
\label{tab:subspace_validation}
\setlength{\tabcolsep}{4pt}
\renewcommand{\arraystretch}{1.02}
\begin{tabular}{lcccc}
\toprule
& \multicolumn{2}{c}{\textbf{Performance}} & \multicolumn{2}{c}{\textbf{Overlap with SVF-Guided}} \\
\cmidrule(lr){2-3} \cmidrule(lr){4-5}
\textbf{Method} & \textbf{Recall (\%)} & \textbf{GSM8K (\%)} & \textbf{Micro Overlap} & \textbf{Micro Jaccard} \\
\midrule
SVF-Guided (ours) & 60.50 & \textbf{80.06} & --- & --- \\
Top-\(k\) Raw     & 57.40 & 77.56 & 0.0229 & 0.0116 \\
Random-\(k\)      & \textbf{61.30} & 72.63 & 0.0145 & 0.0073 \\
\bottomrule
\end{tabular}
\end{table}

\subsection{Q2--Q5: Skill Preservation, Task Generalization, Baseline Comparison, and Cross-Architecture Transfer}
\label{sec:main-results}

We next evaluate whether PALoRA preserves reasoning abilities during factual adaptation, whether this behavior extends across tasks, how it compares with existing baselines, and whether these trends remain consistent across model architectures.

We first evaluate PALoRA on Llama-3.1-8B-Instruct. The base model achieves \(78.54\%\) on GSM8K, \(70.7\%\) on MBPP, and \(90.32\%\) on AI2-ARC; after Phase~1, the corresponding SVF experts reach \(83.85\%\), \(72.7\%\), and \(90.11\%\), respectively.

PALoRA consistently preserves reasoning performance better than unconstrained LoRA and prior spectral PEFT baselines, particularly on GSM8K and MBPP. On AI2-ARC, PALoRA remains competitive while avoiding the larger performance degradation observed with LoRA and CorDA at higher factual loads. Overall, PALoRA achieves a strong balance between reasoning preservation and factual recall.

\begin{table*}[t]
\centering
\footnotesize
\renewcommand{\arraystretch}{0.9}
\setlength{\tabcolsep}{3pt}
\caption{Llama-3.1-8B: knowledge recall and skill accuracy across three reasoning benchmarks.}
\label{tab:llama_main}
\begin{tabular}{@{} p{2.2cm} c c c c c c c c c @{}} 
\toprule
 & & \multicolumn{2}{c}{\textbf{LoRA}} & \multicolumn{2}{c}{\textbf{OPLoRA}} & \multicolumn{2}{c}{\textbf{CorDA}} & \multicolumn{2}{c}{\textbf{PALoRA (Ours)}} \\
\cmidrule(lr){3-4} \cmidrule(lr){5-6} \cmidrule(lr){7-8} \cmidrule(lr){9-10}
\textbf{Skill Dataset} & \textbf{\# Facts} & \textbf{Rec.} & \textbf{Skill} & \textbf{Rec.} & \textbf{Skill} & \textbf{Rec.} & \textbf{Skill} & \textbf{Rec.} & \textbf{Skill} \\
\midrule
\multirow{4}{*}{\textbf{GSM8K (Math)}}
& 100  & 61.0 & 74.45 & 64.0 & 75.82 & \textbf{78.0} & 76.65 & 64.0 & \textbf{79.38} \\
& 250  & 63.2 & 70.13 & 64.0 & 75.06 & \textbf{68.0} & 74.98 & 64.0 & \textbf{78.09} \\
& 500  & 62.2 & 72.71 & 66.2 & 77.56 & \textbf{68.8} & 66.94 & 61.6 & \textbf{82.41} \\
& 1000 & 61.7 & 72.33 & 62.3 & 76.65 & 62.2 & 60.5 & \textbf{62.6} & \textbf{81.12} \\
\midrule
\multirow{4}{*}{\textbf{MBPP (Coding)}}
& 100  & 61.0 & 66.70 & 64.0 & 66.67 & \textbf{68.0} & 50.5 & 59.0 & \textbf{67.70} \\
& 250  & 63.2 & 66.70 & 64.0 & 65.66 & \textbf{66.8} & 32.32 & 60.8 & \textbf{68.70} \\
& 500  & 62.2 & 65.70 & \textbf{66.2} & 66.67 & 38.2 & 27.27 & 58.2 & \textbf{67.70} \\
& 1000 & \textbf{62.6} & 64.60 & 62.3 & 67.68 & 56.3 & 12.12 & 61.7 & \textbf{68.70} \\
\midrule
\multirow{4}{*}{\textbf{AI2-ARC (Scientific)}}
& 100  & 61.0 & 86.03 & \textbf{64.0} & \textbf{88.13} & 63.0 & 83.29 & 60.0 & 87.96 \\
& 250  & 63.2 & 81.14 & 64.0 & 87.54 & \textbf{71.2} & 81.63 & 58.4 & \textbf{88.30} \\
& 500  & 62.2 & 87.37 & \textbf{66.2} & 88.05 & 51.8 & 60.31 & 60.2 & \textbf{89.10} \\
& 1000 & 61.7 & 87.67 & 62.3 & \textbf{88.80} & \textbf{68.2} & 50.96 & 59.3 & 88.43 \\
\bottomrule
\end{tabular}
\end{table*}

We next evaluate the same pipeline on Mistral-7B-Instruct-v0.3. The base model achieves GSM8K \(44.20\%\), MBPP \(49.5\%\), and AI2-ARC \(81.65\%\); Phase~1 SVF experts raise these to \(48.29\%\), \(51.5\%\), and \(85.06\%\), respectively.

\begin{table*}[t]
\centering
\footnotesize
\renewcommand{\arraystretch}{0.9}
\setlength{\tabcolsep}{3pt}
\caption{Mistral-7B: knowledge recall and skill accuracy across three reasoning benchmarks.}
\label{tab:mistral_main}
\begin{tabular}{@{} p{2.2cm} c c c c c c c c c @{}} 
\toprule
 & & \multicolumn{2}{c}{\textbf{LoRA}} & \multicolumn{2}{c}{\textbf{OPLoRA}} & \multicolumn{2}{c}{\textbf{CorDA}} & \multicolumn{2}{c}{\textbf{PALoRA (Ours)}} \\
\cmidrule(lr){3-4} \cmidrule(lr){5-6} \cmidrule(lr){7-8} \cmidrule(lr){9-10}
\textbf{Skill Dataset} & \textbf{\# Facts} & \textbf{Rec.} & \textbf{Skill} & \textbf{Rec.} & \textbf{Skill} & \textbf{Rec.} & \textbf{Skill} & \textbf{Rec.} & \textbf{Skill} \\
\midrule
\multirow{4}{*}{\textbf{GSM8K (Math)}}
& 100  & 63.0 & 39.12 & 60.0 & 42.99 & \textbf{78.0} & 41.24 & 63.0 & \textbf{46.40} \\
& 250  & 66.8 & 41.40 & 56.8 & 42.38 & \textbf{69.6} & 35.37 & 65.6 & \textbf{46.17} \\
& 500  & 63.0 & 41.70 & 52.0 & 42.00 & \textbf{66.4} & 25.02 & 65.6 & \textbf{46.93} \\
& 1000 & 63.9 & 39.27 & 52.7 & 40.11 & \textbf{65.2} & 23.96 & 64.5 & \textbf{44.58} \\
\midrule
\multirow{4}{*}{\textbf{MBPP (Coding)}}
& 100  & 63.0 & 48.48 & 58.0 & 48.48 & \textbf{80.0} & \textbf{49.49} & 67.0 & 47.47 \\
& 250  & 66.8 & \textbf{48.48} & 58.4 & 47.47 & \textbf{74.4} & 45.45 & 64.0 & \textbf{48.48} \\
& 500  & 63.0 & 44.44 & 52.0 & 44.44 & \textbf{72.6} & 42.42 & 63.6 & \textbf{46.46} \\
& 1000 & 63.9 & \textbf{51.52} & 52.7 & 47.47 & \textbf{77.9} & 42.42 & 62.2 & 49.49 \\
\midrule
\multirow{4}{*}{\textbf{AI2-ARC (Scientific)}}
& 100  & 63.0 & 80.09 & 58.0 & 81.52 & \textbf{79.0} & 80.30 & 60.0 & \textbf{84.89} \\
& 250  & 66.8 & 79.08 & 58.4 & 79.46 & \textbf{71.2} & 80.00 & 56.8 & \textbf{85.19} \\
& 500  & 63.0 & 77.90 & 52.0 & 78.54 & \textbf{70.4} & 77.53 & 60.4 & \textbf{84.47} \\
& 1000 & 63.9 & 77.82 & 52.7 & 79.92 & \textbf{70.5} & 78.96 & 60.3 & \textbf{84.01} \\
\bottomrule
\end{tabular}
\end{table*}

The trade-off is more pronounced on Mistral than on Llama. LoRA preserves recall but strongly degrades reasoning, and OPLoRA provides only limited protection. PALoRA improves skill retention substantially on GSM8K and AI2-ARC while keeping recall competitive, showing that the same overall pattern transfers across architectures.

\subsection{Ablation Studies}
\label{sec:ablation}

We next conduct ablation studies to analyze the sensitivity of PALoRA to its primary hyperparameters on Llama-3.1-8B-Instruct using the GSM8K expert and 1000 injected facts. The results reveal a consistent plasticity–stability trade-off. Increasing the LoRA rank improves factual recall by expanding adaptation capacity, but also increases interference with the protected reasoning subspace, leading to weaker GSM8K retention. In contrast, stronger orthogonality regularization improves reasoning preservation at the cost of moderately reduced factual acquisition. We further observe that PALoRA remains relatively robust across a broad range of protected subspace sizes \(k\), with \(k=64\) providing the best overall balance between recall and skill preservation in our experiments.

\begin{table*}[t]
\centering
\small
\caption{Ablations on Llama-3.1-8B with the GSM8K expert and 1000 unknown facts.}
\label{tab:ablation_all}
\setlength{\tabcolsep}{6pt}
\begin{tabular}{ccc|ccc|ccc}
\toprule
\multicolumn{3}{c|}{\textbf{LoRA rank \(r\)}} &
\multicolumn{3}{c|}{\textbf{Orthogonality \(\lambda_{\text{ortho}}\)}} &
\multicolumn{3}{c}{\textbf{Protected subspace \(k\)}} \\
\midrule
Value & Recall & GSM8K & Value & Recall & GSM8K & Value & Recall & GSM8K \\
\midrule
8   & 60.5 & 81.0 & 0.1  & 63.8 & 76.6 & 16  & 60.9 & 79.8 \\
16  & 59.8 & 80.1 & 1    & 62.6 & 81.1 & 32  & 62.0 & 77.9 \\
32  & 65.7 & 72.3 & 10   & 60.1 & 80.1 & 64  & 60.0 & 80.1 \\
64  & 70.5 & 69.4 & 100  & 58.6 & 82.9 & 128 & 61.5 & 78.2 \\
\bottomrule
\end{tabular}
\end{table*}

\subsection{Training Dynamics and Computational Overhead}
\label{sec:dynamics}

We finally analyze the optimization dynamics and computational overhead of PALoRA. Figure~\ref{fig:dynamics} shows that PALoRA successfully acquires new factual knowledge while maintaining minimal interference with the protected reasoning subspace throughout training. Compared with standard LoRA, PALoRA introduces two additional components: an offline SVF specialization phase and a lightweight orthogonality regularizer during adaptation. Despite these additions, the computational overhead remains modest. In our experiments, PALoRA increases training runtime by only approximately \(5\)–\(10\%\) relative to standard LoRA fine-tuning, while the memory overhead associated with storing the protected singular vectors remains below \(1\%\) of the total model footprint.

\begin{figure}[!htbp]
    \centering
    \begin{minipage}{0.47\textwidth}
        \centering
        \includegraphics[width=\textwidth]{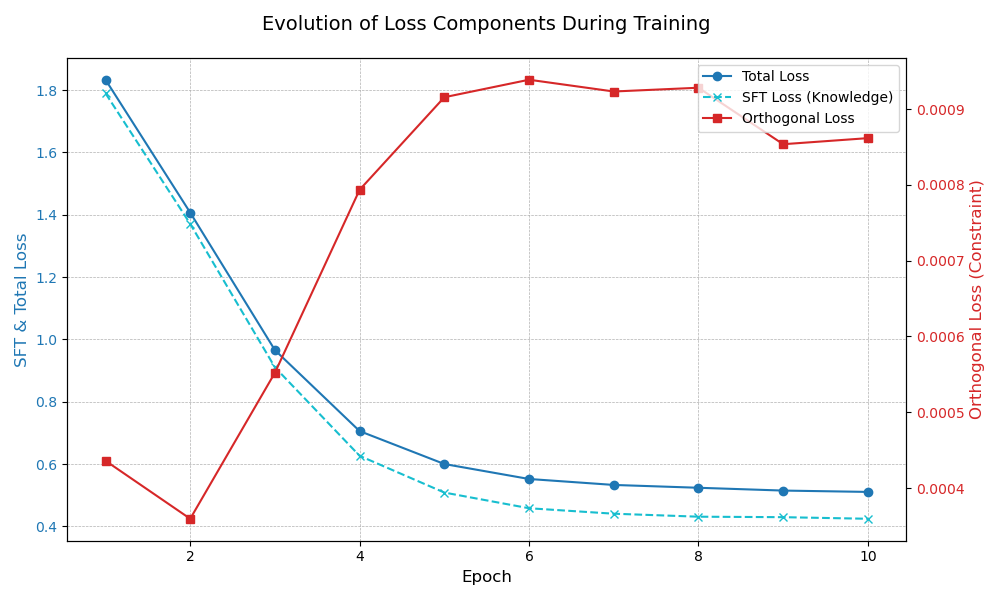}
        \small (a) Loss evolution during Phase~2.
    \end{minipage}
    \hfill
    \begin{minipage}{0.47\textwidth}
        \centering
        \includegraphics[width=\textwidth]{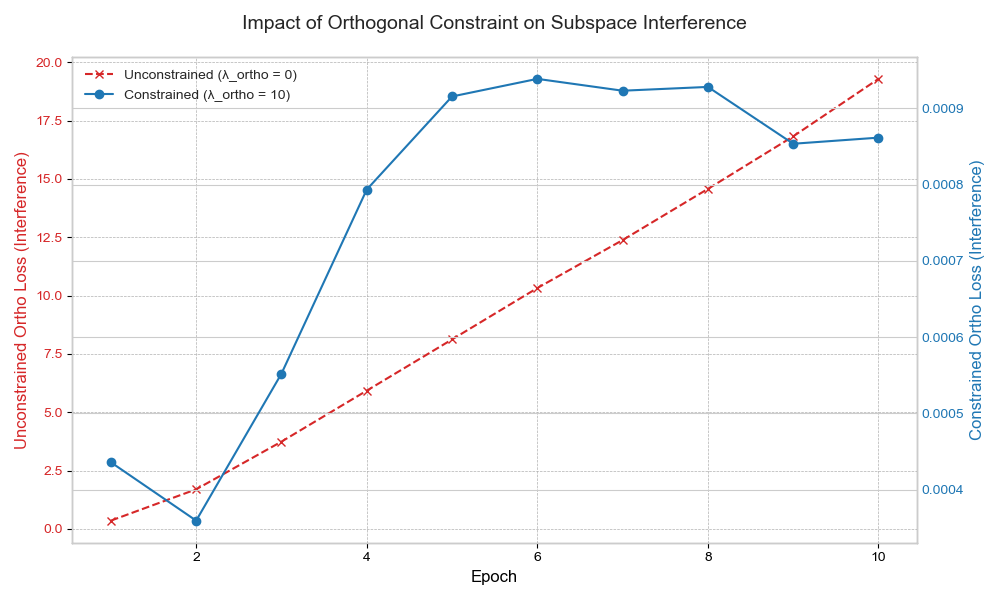}
        \small (b) Interference with and without the constraint.
    \end{minipage}
    \caption{Training dynamics for Llama-3.1-8B with the GSM8K expert and 1000 unknown facts. The SFT loss decreases steadily, while the constraint suppresses projection of the LoRA update onto the critical skill subspace by more than an order of magnitude.}
    \label{fig:dynamics}
\end{figure}

\section{Conclusion}
\label{secconclusion}

We investigated the plasticity–stability trade-off that arises when adapting large language models to new factual knowledge while preserving existing reasoning abilities. Our analysis shows that reasoning-relevant information in Transformer MLP weights is not concentrated solely in the dominant singular directions, but is instead distributed across the singular spectrum. This finding challenges the common assumption that protecting only the top spectral components is sufficient for preserving downstream skills during adaptation.

Building on this insight, we introduced PALoRA, a two-stage framework for skill-preserving knowledge injection. PALoRA first trains a Singular Value Finetuning expert to identify a skill-critical subspace, then constrains LoRA-based factual updates to remain structurally orthogonal to that subspace. We further provided a theoretical motivation for why top-\(k\)-only spectral protection can fail, and empirically demonstrated the effectiveness of PALoRA across two model families and three reasoning domains. Our results show that PALoRA consistently preserves reasoning performance better than prior spectral PEFT baselines while maintaining competitive factual recall.

A limitation of PALoRA is that its knowledge-injection capacity depends on the strength of the subspace protection: more aggressive preservation of skill-critical directions can reduce factual adaptation performance. This reflects an inherent trade-off between retaining prior capabilities and incorporating new knowledge. Future work should extend PALoRA to simultaneous preservation of multiple skills, broader model architectures, and continual adaptation settings in which models must integrate evolving knowledge over time without cumulative degradation.

\bibliographystyle{unsrtnat}
\bibliography{references}

\newpage
%%%%%%%%%%%%%%%%%%%%%%%%%%%%%%%%%%%%%%%%%%%%%%%%%%%%%%%%%%%%
\appendix

\section*{Appendix}
This appendix provides supplementary material supporting the theoretical, methodological, and empirical results of the paper. We first present the full proofs of the main theorem and corollaries, then provide a consolidated notation reference covering the symbols used throughout the paper. We next report additional compute, parameter, and memory analyses, followed by detailed descriptions of the datasets, training configuration, and evaluation protocol. We then describe the baseline adaptation and fairness protocol used in our comparisons, and finally provide additional SVF adapter visualizations to further illustrate the distributed structure of skill-relevant singular components.

\section{Full Proof of Theorem~\ref{thm:distributed} and Corollaries}
\label{app:full-proof}

\subsection{Notation}

Let \(\mathbf{W} \in \mathbb{R}^{m \times n}\) be an MLP weight matrix of a Transformer layer.
Its singular value decomposition is
\[
\mathbf{W}
=
\mathbf{U}\boldsymbol{\Sigma}\mathbf{V}^\top
=
\sum_{i=1}^{r} \sigma_i \mathbf{u}_i \mathbf{v}_i^\top,
\]
where \(r=\operatorname{rank}(\mathbf{W})\),
\(\mathbf{u}_i \in \mathbb{R}^{m}\) and \(\mathbf{v}_i \in \mathbb{R}^{n}\)
are the left and right singular vectors, and
\(\sigma_1 \ge \sigma_2 \ge \cdots \ge \sigma_r > 0\).

Let \(\mathbf{z} \in \mathbb{R}^{r}\) denote the SVF scaling vector.
The scaled weight matrix is
\[
\mathbf{W}_{\mathbf{z}}
=
\mathbf{U}\operatorname{diag}(\mathbf{z}\odot\boldsymbol{\sigma})\mathbf{V}^\top
=
\sum_{i=1}^{r} z_i \sigma_i \mathbf{u}_i \mathbf{v}_i^\top.
\]
Let \(\mathbf{z}^*\) denote the optimal SVF scaling vector obtained after skill fine-tuning on task \(T\).
Let \(L_T(\mathbf{W})\) denote the task-specific loss over dataset \(\mathcal{D}_T\).

Let \(\mathcal{U}_T \subset \mathbb{R}^{n}\) be the task-specific input subspace associated with skill \(T\),
with orthogonal projector
\[
\mathbf{P}_T = \mathbf{V}_T \mathbf{V}_T^\top,
\qquad
\mathbf{V}_T \in \mathbb{R}^{n \times s},
\qquad
s \ll n.
\]
We define the set of skill-relevant singular components by
\[
\mathcal{S}_T
=
\left\{
i \in \{1,\dots,r\}
\;\middle|\;
|z_i^* - 1| > \varepsilon
\right\},
\]
for a threshold \(\varepsilon > 0\).

We use the Frobenius inner product
\[
\langle \mathbf{A}, \mathbf{B} \rangle_F = \operatorname{tr}(\mathbf{A}^\top \mathbf{B}),
\]
and the Frobenius norm \(\|\mathbf{A}\|_F\).

\subsection{Formal Assumptions}
\label{app:assump}

\begin{assumption}[Pre-training data diversity]
\label{ass:div}
The pre-training corpus is drawn from a distribution \(\mathcal{D}_{\mathrm{prior}}\) with broad spectral support.
The input covariance matrix
\[
\mathbf{C}_{\mathrm{prior}}
=
\mathbb{E}_{\mathbf{x}\sim\mathcal{D}_{\mathrm{prior}}}
[\mathbf{x}\mathbf{x}^\top]
\in \mathbb{R}^{n\times n}
\]
has full rank, and its eigenvalues satisfy
\[
\lambda_1 \ge \lambda_2 \ge \cdots \ge \lambda_n > 0,
\qquad
\lambda_i \asymp i^{-\alpha},
\qquad
\alpha > 0.
\]
Consequently, the pre-trained weight matrix \(\mathbf{W}_0\) concentrates its principal
singular vectors along directions of maximum input variance under \(\mathcal{D}_{\mathrm{prior}}\).
\end{assumption}

\begin{assumption}[Skill compositionality and misalignment]
\label{ass:skill}
A skill \(T\) corresponds to a compositional function
\(f_T : \mathbb{R}^{n} \to \mathbb{R}^{d}\)
that depends only on a low-dimensional, task-specific subspace
\(\mathcal{U}_T \subset \mathbb{R}^{n}\).
Formally, there exists a projector
\(\mathbf{P}_T = \mathbf{V}_T \mathbf{V}_T^\top\),
\(\mathbf{V}_T \in \mathbb{R}^{n \times s}\),
such that
\[
f_T(\mathbf{x}) = f_T(\mathbf{P}_T \mathbf{x})
\quad \text{almost surely under the task distribution.}
\]
Moreover, the task subspace is misaligned with the top-\(k\) right singular directions of \(\mathbf{W}_0\):
for every \(i \le k\),
\begin{equation}
    \|\mathbf{P}_T \mathbf{v}_i\|_2^2 \le \delta,
    \qquad
    0 < \delta \ll 1.
    \label{eq:misalignment_assumption}
\end{equation}
\end{assumption}

\begin{assumption}[Gradient structure]
\label{ass:grad}
Skill adaptation is performed via first-order gradient descent on \(L_T(\mathbf{W})\),
starting from the pre-trained weight matrix \(\mathbf{W}_0\).
The gradient at \(\mathbf{W}_0\) decomposes as
\begin{equation}
    \nabla_{\mathbf{W}} L_T(\mathbf{W}_0)
    =
    \mathbf{G}_T + \mathbf{N},
    \label{eq:grad_decomp}
\end{equation}
where \(\mathbf{G}_T\) is a low-rank structured term with
\(\operatorname{rank}(\mathbf{G}_T) \le s\),
and its row space lies in \(\mathcal{U}_T\).
Accordingly,
\[
\mathbf{G}_T
=
\sum_{j=1}^{s} \mathbf{g}_j \mathbf{h}_j^\top,
\qquad
\mathbf{h}_j \in \mathcal{U}_T \ \forall j,
\]
and \(\mathbf{g}_j \in \mathbb{R}^{m}\).
The residual term \(\mathbf{N}\) is a zero-mean noise matrix with small Frobenius norm
\[
\|\mathbf{N}\|_F \le \epsilon_{\mathrm{noise}}.
\]
\end{assumption}

\subsection{Proof of Theorem~\ref{thm:distributed}}

\paragraph{Detailed version of Theorem~\ref{thm:distributed} (Distributed encoding of skill-relevant components).}
Under Assumptions~\ref{ass:div}, \ref{ass:skill}, and \ref{ass:grad},
for any \(k < \operatorname{rank}(\mathbf{W})\) satisfying the misalignment condition in
Assumption~\ref{ass:skill}, the set of skill-relevant components \(\mathcal{S}_T\) is
\emph{not} contained in the top-\(k\) index set \(\{1,2,\dots,k\}\).

\paragraph{Proof.}
We follow the same four-step structure as the handwritten manuscript.

\paragraph{Step 1: Gradient spectrum decomposition.}
Decompose the task gradient \(\mathbf{G}_T\) in the singular basis of \(\mathbf{W}\).
Define the spectral loading coefficients
\begin{equation}
    \gamma_i
    =
    \langle \mathbf{u}_i \mathbf{v}_i^\top, \mathbf{G}_T \rangle_F
    =
    \mathbf{u}_i^\top \mathbf{G}_T \mathbf{v}_i,
    \qquad i=1,\dots,r.
    \label{eq:gamma_def}
\end{equation}

Substituting
\(
\mathbf{G}_T = \sum_{j=1}^{s} \mathbf{g}_j \mathbf{h}_j^\top
\),
we obtain
\begin{equation}
    \gamma_i
    =
    \sum_{j=1}^{s}
    (\mathbf{u}_i^\top \mathbf{g}_j)
    (\mathbf{h}_j^\top \mathbf{v}_i).
    \label{eq:gamma_expand}
\end{equation}

The factor \(\mathbf{h}_j^\top \mathbf{v}_i = \langle \mathbf{h}_j, \mathbf{v}_i \rangle\)
is the projection of \(\mathbf{h}_j \in \mathcal{U}_T\) onto \(\mathbf{v}_i\).
Since \(\mathbf{h}_j \in \mathcal{U}_T\), we have
\[
|\mathbf{h}_j^\top \mathbf{v}_i|
=
|\mathbf{h}_j^\top \mathbf{P}_T \mathbf{v}_i|
\le
\|\mathbf{h}_j\|_2 \, \|\mathbf{P}_T \mathbf{v}_i\|_2.
\]
By Assumption~\ref{ass:skill}, for every \(i \le k\),
\[
\|\mathbf{P}_T \mathbf{v}_i\|_2 \le \sqrt{\delta},
\]
hence
\[
|\mathbf{h}_j^\top \mathbf{v}_i|
\le
\sqrt{\delta}\,\|\mathbf{h}_j\|_2.
\]
Applying Cauchy--Schwarz in Eq.~\ref{eq:gamma_expand} yields
\begin{equation}
    |\gamma_i|
    \le
    C_1 \sqrt{\delta},
    \qquad
    \forall i \le k,
    \label{eq:gamma_small}
\end{equation}
for some constant \(C_1>0\) depending on \(\mathbf{G}_T\).

Conversely, since \(\mathbf{G}_T \neq \mathbf{0}\) and
\(\mathcal{U}_T\) is generically misaligned with
\(\operatorname{span}\{\mathbf{v}_1,\dots,\mathbf{v}_k\}\),
a dimension-counting argument on the Grassmannian implies the existence of at least one
index \(i^*>k\) such that
\begin{equation}
    |\gamma_{i^*}|
    \ge
    \frac{c_T}{\sqrt{n-k}},
    \qquad
    c_T = \frac{\|\mathbf{G}_T\|_F}{\sqrt{s}} > 0.
    \label{eq:gamma_large}
\end{equation}

\paragraph{Step 2: Singular-value perturbation under a full-parameter gradient update.}
Consider one gradient descent step on the full weight matrix:
\begin{equation}
    \mathbf{W}_1
    =
    \mathbf{W}_0 - \eta \nabla_{\mathbf{W}}L_T(\mathbf{W}_0)
    =
    \mathbf{W}_0 - \eta(\mathbf{G}_T + \mathbf{N}).
    \label{eq:gd_step}
\end{equation}
By first-order perturbation theory for singular values~\cite{horn2012matrix,stewart1990matrix,golub2013matrix},
the change in the \(i\)-th singular value satisfies
\begin{equation}
    \frac{|\Delta \sigma_i|}{\sigma_i}
    =
    \eta |\gamma_i|
    +
    O(\eta \|\mathbf{N}\|_F + \eta^2),
    \label{eq:relative_sigma_change}
\end{equation}
where \(\Delta \sigma_i = \sigma_i(\mathbf{W}_1)-\sigma_i(\mathbf{W}_0)\).  (This relation also follows from Weyl's inequality~\cite{weyl1912asymptotische} in the perturbation regime.)

Combining Eq.~\ref{eq:relative_sigma_change} with
Eqs.~\ref{eq:gamma_small}--\ref{eq:gamma_large}, we obtain
\[
\frac{|\Delta \sigma_i|}{\sigma_i}
=
O(\eta \sqrt{\delta}),
\qquad i \le k,
\]
whereas for the index \(i^*>k\),
\[
\frac{|\Delta \sigma_{i^*}|}{\sigma_{i^*}}
\ge
\eta \frac{c_T}{\sqrt{n-k}}
-
O(\eta \|\mathbf{N}\|_F + \eta^2).
\]
Thus, under a direct gradient update, the top-\(k\) singular values change only weakly,
whereas the off-principal component \(i^*\) changes at non-negligible scale.

\paragraph{Step 3: SVF scaling vector and skill relevance.}
In the SVF parameterization,
\[
\mathbf{W}_{\mathbf{z}}
=
\sum_{i=1}^{r} z_i \sigma_i \mathbf{u}_i \mathbf{v}_i^\top.
\]
The gradient of the task loss with respect to \(z_i\) is
\begin{equation}
    \frac{\partial L_T}{\partial z_i}
    =
    \left\langle
    \nabla_{\mathbf{W}}L_T,\,
    \sigma_i \mathbf{u}_i \mathbf{v}_i^\top
    \right\rangle_F
    =
    \sigma_i \gamma_i
    +
    O(\sigma_i \|\mathbf{N}\|_F).
    \label{eq:svf_grad}
\end{equation}

Starting from the identity initialization \(z_i=1\), one gradient step in the SVF
parameterization gives
\begin{equation}
    z_i^{(1)} - 1
    =
    -\eta_z \sigma_i \gamma_i
    +
    O(\eta_z \sigma_i \|\mathbf{N}\|_F + \eta_z^2).
    \label{eq:zi_step}
\end{equation}
Therefore, to first order,
\begin{equation}
    |z_i^* - 1|
    =
    \eta_z \sigma_i |\gamma_i|
    +
    O(\eta_z \sigma_i \|\mathbf{N}\|_F + \eta_z^2).
    \label{eq:zi_importance}
\end{equation}

For \(i \le k\), Eq.~\ref{eq:gamma_small} yields
\[
|z_i^* - 1|
\le
C \eta_z \sigma_i \sqrt{\delta}
+
O(\eta_z \sigma_i \|\mathbf{N}\|_F + \eta_z^2),
\]
whereas for the index \(i^*>k\), Eq.~\ref{eq:gamma_large} yields
\[
|z_{i^*}^* - 1|
\ge
\eta_z \sigma_{i^*}\frac{c_T}{\sqrt{n-k}}
-
O(\eta_z \sigma_{i^*}\|\mathbf{N}\|_F + \eta_z^2).
\]

\paragraph{Step 4: Connection and integration.}
Choose a threshold \(\varepsilon\) of the same order as the top-\(k\) deviation scale,
for example
\[
\varepsilon = C \eta_z \sigma_{\max}\sqrt{\delta},
\]
up to the higher-order residual terms.
Then every index \(i \le k\) satisfies
\[
|z_i^* - 1| \le \varepsilon,
\]
whereas the index \(i^*>k\) satisfies
\[
|z_{i^*}^* - 1| > \varepsilon
\]
whenever the lower bound above dominates the residual terms.
Hence \(i^* \in \mathcal{S}_T\) with \(i^*>k\), and therefore
\[
\mathcal{S}_T \cap \{k+1,\dots,r\} \neq \emptyset.
\]
This proves that the set of skill-relevant singular components is not contained in
the top-\(k\) index set.
\hfill\(\square\)

\subsection{Corollaries}

\paragraph{Detailed version of Corollary~\ref{cor:topk-fail} (Sub-optimality of top-\(k\) projection).}
Under Assumptions~\ref{ass:div}--\ref{ass:grad},
any compression or fine-tuning method that preserves or updates only the top-\(k\)
singular components fails to preserve all skill-relevant directions.
Under the corresponding local approximation, the resulting degradation satisfies
\[
\mathbb{E}\!\left[L_T(\mathbf{W}_{\text{top-}k}) - L_T(\mathbf{W}^*)\right]
\ge
\Omega\!\left(\frac{c_T^2}{n-k}\right),
\]
where \(\mathbf{W}_{\text{top-}k}\) denotes the matrix obtained by retaining only the
top-\(k\) protected components, and \(\mathbf{W}^*\) denotes the unconstrained
skill-adapted optimum.

\paragraph{Detailed version of Corollary~\ref{cor:svf-opt} (SVF as a diagnostic probe).}
The SVF scaling vector \(\mathbf{z}^*\) correctly identifies skill-relevant components
despite their singular-value rank, because
\[
|z_i^* - 1| \propto \sigma_i |\gamma_i|
\]
to first order.
Therefore, a component with a relatively small raw singular value can still exhibit a
large deviation from \(1\) when its task-gradient projection is sufficiently strong.

\paragraph{Detailed version of Corollary~\ref{cor:multiskill} (Multi-skill distribution).}
Suppose the model is fine-tuned on \(M\) skills \(\{T_1,\dots,T_M\}\), each with its own
task subspace \(\mathcal{U}_{T_m}\) satisfying Assumption~\ref{ass:skill}.
Let \(\mathcal{S}_{T_m}\) denote the corresponding skill-relevant index set, and define
\[
\mathcal{S}
=
\bigcup_{m=1}^{M}\mathcal{S}_{T_m}.
\]
If these task subspaces are generically incoherent, then \(\mathcal{S}\) spreads across
a broad portion of the singular spectrum.
Consequently, no fixed top-\(k\) subset can capture all skill-relevant components for all skills simultaneously.

\subsection{Connection to Prior Work}
\label{app:connection}

Our theoretical framework provides a unifying geometric perspective on several recent PEFT methods.
\textbf{MiLoRA}~\cite{wang2025milora} fine‑tunes only the minor singular components, hypothesising that principal components store task‑agnostic knowledge – our analysis formalises \emph{why} the minor components carry skill‑specific adaptations.
\textbf{SVFT}~\cite{lingam2024svft} learns sparse combinations of singular vectors, with the insight that fine‑grained control over expressivity comes from choosing \emph{which} singular components to update, a direct consequence of Theorem~\ref{thm:distributed}.
\textbf{PiSSA}~\cite{pissa2024} and \textbf{CorDA}~\cite{yang2024corda} optimise the principal or context‑oriented components; while suitable for general fine‑tuning, Corollary~\ref{cor:topk-fail} proves that protecting only principal components is strictly sub‑optimal for decoupled adaptation, as specialised skills are fundamentally distributed.

\subsection{Geometric Interpretation and Limitation}
\label{app:discussion}

The proof formalizes a simple geometric picture.
Pre-training aligns the principal singular directions of the weight matrix with directions
of high variance in the corpus, and these directions are largely task-agnostic.
A reasoning skill, by contrast, activates a task-specific subspace that can be substantially
misaligned with those principal directions.
As a result, skill-relevant information may live in singular components that are not among
the largest singular values.

The main limitation is the misalignment assumption in Assumption~\ref{ass:skill}.
If a skill happens to align strongly with dominant corpus directions, then the top singular
vectors may already capture a large portion of the skill, and the distributed-spectrum
conclusion may weaken.
The theorem should therefore be interpreted as a structured explanation for the regime
observed in complex reasoning tasks, rather than as a universal statement for every task.

\section{Global Notation Reference}
\label{app:notation}

This section consolidates the notation used throughout the paper.
For readability, we group symbols by their role in the overall training pipeline~\ref{tab:notation_general}, the PALoRA parameterization~\ref{tab:notation_method}, and the theoretical analysis~\ref{tab:notation_theory}.

\begin{table*}[!htbp]
    \centering
    \small
    \caption{General notation, data, and optimization variables used throughout the paper.}
    \label{tab:notation_general}
    \begin{tabularx}{\textwidth}{>{\raggedright\arraybackslash}p{0.27\textwidth}X}
        \toprule
        \textbf{Symbol} & \textbf{Description} \\
        \midrule
        \(\mathcal{M}(\theta_{\text{base}})\) & Pre-trained base language model with frozen parameters \(\theta_{\text{base}}\). \\
        \(\theta_{\text{base}}\) & Parameters of the frozen base model. \\
        \(\theta_{\text{LoRA}}\) & Trainable LoRA parameters, collected as \(\{\mathbf{B}_l,\mathbf{A}_l\}_{l=1}^{L}\). \\
        \(\mathcal{D}_{\text{skill}}\) & Skill dataset used in Phase~1 to train the SVF expert. \\
        \(\mathcal{D}_{\text{know}}\) & Knowledge dataset used in Phase~2 for factual knowledge injection. \\
        \(\mathcal{D}_T\) & Task-specific dataset associated with skill or task \(T\) in the theoretical analysis. \\
        \(\mathcal{D}_{\mathrm{prior}}\) & Pre-training data distribution used in the formal assumptions. \\
        \(L\) & Number of adapted MLP layers. \\
        \(l\) & Layer index. \\
        \(T\) & Number of Phase~2 training steps in Algorithm~\ref{alg:training}; in the theory section, \(T\) also denotes a task or skill, with the meaning determined by context. \\
        \(T_m\) & The \(m\)-th skill in the multi-skill analysis. \\
        \(k\) & Number of singular directions selected as the protected skill-critical subspace in each layer. \\
        \(r\) & LoRA rank in the adaptation module; when used in the theory section as \(r=\operatorname{rank}(\mathbf{W})\), it denotes the rank of the weight matrix. \\
        \(\alpha\) & LoRA scaling hyperparameter in the experiments; in Assumption~\ref{ass:div}, \(\alpha>0\) also denotes the spectral decay exponent of the covariance eigenvalues. \\
        \(\lambda_{\text{ortho}}\) & Weight of the orthogonality regularization term in Phase~2. \\
        \(\mathcal{L}_{\text{SFT}}\) & Supervised fine-tuning loss used for factual knowledge injection. \\
        \(\mathcal{L}_{\text{ortho},l}\) & Orthogonality penalty applied at layer \(l\). \\
        \(\mathcal{L}_{\text{total}}\) & Total Phase~2 objective, \(\mathcal{L}_{\text{SFT}} + \lambda_{\text{ortho}} \sum_{l=1}^{L}\mathcal{L}_{\text{ortho},l}\). \\
        \bottomrule
    \end{tabularx}
\end{table*}

\begin{table*}[!htbp]
    \centering
    \small
    \caption{Notation for the PALoRA parameterization, SVD quantities, and protected subspaces.}
    \label{tab:notation_method}
    \begin{tabularx}{\textwidth}{>{\raggedright\arraybackslash}p{0.27\textwidth}X}
        \toprule
        \textbf{Symbol} & \textbf{Description} \\
        \midrule
        \(\mathbf{W}_{\text{mlp}}^{(l)}\) & Pre-trained MLP weight matrix at layer \(l\), with shape \(\mathbb{R}^{d_{\text{out}} \times d_{\text{in}}}\). \\
        \(\mathbf{W}_{\text{mlp}}^{\prime(l)}\) & Adapted MLP weight matrix after combining the SVF and LoRA components. \\
        \(\operatorname{SVF}(\mathbf{W})\) & Singular Value Fine-Tuning transformation applied to a pre-trained weight matrix \(\mathbf{W}\). \\
        \(\operatorname{LoRA}(\mathbf{W})\) & Low-rank adaptation term applied to a pre-trained weight matrix \(\mathbf{W}\). \\
        \(\Delta \mathbf{W}_l\) & LoRA update at layer \(l\), defined as \(\mathbf{B}_l\mathbf{A}_l\). \\
        \(\mathbf{B}_l\) & Left LoRA factor for layer \(l\), with shape \(\mathbb{R}^{d_{\text{out}} \times r}\). \\
        \(\mathbf{A}_l\) & Right LoRA factor for layer \(l\), with shape \(\mathbb{R}^{r \times d_{\text{in}}}\). \\
        \(d_{\text{out}}, d_{\text{in}}\) & Output and input dimensions of an adapted MLP weight matrix. \\
        \(\mathbf{W}_{\text{mlp}}^{(l)} = \mathbf{U}_l \operatorname{diag}(\boldsymbol{\sigma}_l)\mathbf{V}_l^\top\) & Singular value decomposition of the pre-trained MLP weight matrix at layer \(l\). \\
        \(\mathbf{U}_l, \mathbf{V}_l\) & Left and right singular-vector matrices for layer \(l\). \\
        \(\boldsymbol{\sigma}_l\) & Vector of singular values of \(\mathbf{W}_{\text{mlp}}^{(l)}\). \\
        \(\sigma_{l,i}\) & \(i\)-th singular value of the pre-trained MLP weight matrix at layer \(l\). \\
        \(\mathbf{u}_{l,i}\) & \(i\)-th left singular vector of layer \(l\), used in the protected output subspace. \\
        \(\mathbf{v}_{l,i}\) & \(i\)-th right singular vector of layer \(l\). \\
        \(\mathbf{z}_l\) & Trainable SVF scaling vector for layer \(l\). \\
        \(\mathbf{z}_l^*\) & Frozen SVF scaling vector obtained after Phase~1 training. \\
        \(\mathbf{1}\) & All-ones vector used as the identity reference for SVF scaling. \\
        \(\mathcal{I}_{\text{crit},l}\) & Set of top-\(k\) singular-component indices in layer \(l\) selected by the magnitude of \(|z_{l,i}^*-1|\). \\
        \bottomrule
    \end{tabularx}
\end{table*}

\begin{table*}[!htbp]
    \centering
    \small
    \caption{Notation used in the theoretical analysis and proofs.}
    \label{tab:notation_theory}
    \begin{tabularx}{\textwidth}{>{\raggedright\arraybackslash}p{0.27\textwidth}X}
        \toprule
        \textbf{Symbol} & \textbf{Description} \\
        \midrule
        \(\mathbf{W} \in \mathbb{R}^{m \times n}\) & Generic pre-trained weight matrix used in the theoretical analysis. \\
        \(\mathbf{W}_0\) & Initial pre-trained weight matrix before task adaptation. \\
        \(\mathbf{W}_1\) & Weight matrix after one gradient-based update step. \\
        \(\mathbf{W}_{\mathbf{z}}\) & SVF-parameterized weight matrix obtained by scaling the singular values of \(\mathbf{W}\). \\
        \(\mathbf{W}^*\) & Unconstrained skill-adapted optimum in the corollary analysis. \\
        \(\mathbf{W}_{\text{top-}k}\) & Matrix obtained by retaining or protecting only the top-\(k\) singular components. \\
        \(m,n\) & Output and input dimensions of the generic theoretical weight matrix \(\mathbf{W}\). \\
        \(\mathbf{U}, \boldsymbol{\Sigma}, \mathbf{V}\) & SVD factors of \(\mathbf{W}\), where \(\boldsymbol{\Sigma}=\operatorname{diag}(\sigma_1,\dots,\sigma_r)\). \\
        \(\sigma_i\) & \(i\)-th singular value of \(\mathbf{W}\). \\
        \(\mathbf{u}_i, \mathbf{v}_i\) & \(i\)-th left and right singular vectors of \(\mathbf{W}\). \\
        \(\mathcal{U}_T\) & Low-dimensional task-specific input subspace associated with skill \(T\). \\
        \(\mathbf{P}_T\) & Orthogonal projector onto \(\mathcal{U}_T\), written as \(\mathbf{V}_T\mathbf{V}_T^\top\). \\
        \(\mathbf{V}_T\) & Basis matrix whose columns span the task-specific subspace \(\mathcal{U}_T\). \\
        \(f_T\) & Task-specific compositional function associated with skill \(T\). \\
        \(\mathbf{C}_{\mathrm{prior}}\) & Input covariance matrix induced by the pre-training distribution. \\
        \(\lambda_i\) & \(i\)-th eigenvalue of the covariance matrix \(\mathbf{C}_{\mathrm{prior}}\). \\
        \(\mathbf{G}_T\) & Low-rank structured component of the task gradient. \\
        \(\mathbf{N}\) & Zero-mean noise component in the gradient decomposition. \\
        \(\mathbf{g}_j, \mathbf{h}_j\) & Factors in the decomposition \(\mathbf{G}_T=\sum_{j=1}^{s}\mathbf{g}_j\mathbf{h}_j^\top\), with \(\mathbf{h}_j \in \mathcal{U}_T\). \\
        \(s\) & Rank bound on the structured task-gradient term \(\mathbf{G}_T\), and dimension parameter of the task subspace basis. \\
        \(\gamma_i\) & Spectral loading coefficient \(\mathbf{u}_i^\top \mathbf{G}_T \mathbf{v}_i\), measuring alignment of the task gradient with singular component \(i\). \\
        \(\Delta \sigma_i\) & Change in the \(i\)-th singular value after a gradient update. \\
        \(\eta\) & Learning rate for the full weight-space gradient step in the perturbation analysis. \\
        \(\eta_z\) & Learning rate for the SVF scaling-vector update. \\
        \(\delta\) & Misalignment parameter controlling how weakly the task subspace aligns with the top-\(k\) right singular directions. \\
        \(\varepsilon\) & Threshold used to define the set of skill-relevant singular components. \\
        \(\epsilon_{\mathrm{noise}}\) & Upper bound on the Frobenius norm of the noise term \(\mathbf{N}\). \\
        \(c_T\) & Task-dependent positive constant, defined from \(\|\mathbf{G}_T\|_F / \sqrt{s}\) in the proof. \\
        \(\mathcal{S}_T\) & Set of skill-relevant singular indices identified by the SVF scaling deviation \(|z_i^*-1|\). \\
        \(\mathcal{S}\) & Union of skill-relevant singular indices across multiple skills. \\
        \(M\) & Number of skills considered in the multi-skill corollary. \\
        \(\langle \mathbf{A}, \mathbf{B} \rangle_F\) & Frobenius inner product between matrices \(\mathbf{A}\) and \(\mathbf{B}\). \\
        \(\|\mathbf{A}\|_F\) & Frobenius norm of matrix \(\mathbf{A}\). \\
        \bottomrule
    \end{tabularx}
\end{table*}

\section{Compute, Parameter Count, and Memory Analysis}
\label{sec:appendix_compute_memory}

This section analyzes the practical overhead of PALoRA from first principles. We distinguish three quantities: \emph{trainable parameter count}, \emph{additional stored state}, and \emph{computational overhead}. The discussion is analytical and follows directly from the PALoRA formulation in Sections~\ref{sec:method} and \ref{sec:two-phase}.

\paragraph{Trainable parameters.}
Let \(\mathcal{L}_{\mathrm{MLP}}\) denote the set of adapted MLP weight matrices, and let each matrix \(W_l \in \mathbb{R}^{d_{\mathrm{out},l}\times d_{\mathrm{in},l}}\) have singular-value dimension \(n_l=\min(d_{\mathrm{out},l},d_{\mathrm{in},l})\). In Phase~1, the only trainable variables are the SVF scaling vectors \(\{z_l\}_{l\in\mathcal{L}_{\mathrm{MLP}}}\), so the number of trainable parameters is
\[
P_{\mathrm{SVF}}=\sum_{l\in\mathcal{L}_{\mathrm{MLP}}} n_l.
\]
In Phase~2, PALoRA trains only the LoRA matrices \(\{B_l,A_l\}_{l\in\mathcal{L}_{\mathrm{MLP}}}\). For LoRA rank \(r\), the corresponding trainable parameter count is
\[
P_{\mathrm{LoRA}}=\sum_{l\in\mathcal{L}_{\mathrm{MLP}}} r\,(d_{\mathrm{out},l}+d_{\mathrm{in},l}),
\]
which is the same parameterization as standard LoRA on the same target modules. Thus, PALoRA does not increase the number of trainable Phase~2 parameters relative to a matched LoRA baseline; its additional complexity comes from the frozen SVF expert and the orthogonality constraint.

\paragraph{Additional stored state.}
Relative to standard LoRA, PALoRA requires three additional objects. First, it stores the frozen SVF scaling vectors \(z_l\) obtained in Phase~1. Second, for each adapted layer \(l\), it stores the critical index set
\[
\mathcal{I}_{\mathrm{crit},l}
=
\operatorname{TopK}_{i}\bigl(|z_{l,i}-1|\bigr),
\]
as defined in the main text. Third, it stores the singular vectors associated with these selected indices, since the Phase~2 orthogonality loss is computed by projecting the LoRA update onto the corresponding protected directions. If the protected subspace size is \(k\) for every layer, then the dominant additional storage is linear in the total number of stored singular vectors, namely on the order of \(\sum_l k\,d_{\mathrm{out},l}\) for the left-vector formulation used in this paper. By contrast, the storage of \(z_l\) and \(\mathcal{I}_{\mathrm{crit},l}\) is comparatively small.

\paragraph{Computational overhead.}
PALoRA introduces overhead in two places. The first is an offline cost incurred before Phase~2, consisting of the singular value decomposition of the adapted MLP weights together with the training of the SVF expert in Phase~1. This cost is paid once per skill expert and is independent of the subsequent number of Phase~2 knowledge-injection runs that reuse the same frozen probe.

The second source of overhead appears during Phase~2 training. In addition to the standard supervised fine-tuning objective, PALoRA evaluates the orthogonality regularizer
\[
\mathcal{L}_{\mathrm{ortho},l}
=
\sum_{i\in\mathcal{I}_{\mathrm{crit},l}}
\left\|(B_lA_l)^\top u_{l,i}\right\|_2^2
\]
for each adapted layer. The resulting per-step overhead therefore depends primarily on the number of protected directions and on the dimensions of the adapted MLP layers, rather than on any increase in trainable parameter count.

\paragraph{Inference-time considerations.}
The orthogonality term is used only during optimization in Phase~2 and does not introduce an additional objective at inference time. After training, deployment uses the adapted model parameters in the same way as other LoRA-style methods. Consequently, the practical inference footprint is determined by the final adapted weights rather than by the training-time regularization procedure.

\paragraph{Interpretation.}
This decomposition clarifies the cost profile of PALoRA. Phase~1 adds a one-time preprocessing stage, while Phase~2 preserves the same LoRA parameterization as a matched baseline and adds only a structured regularization term tied to the protected skill subspace. The main trade-off is therefore not an increase in trainable capacity, but the extra storage and computation required to identify and preserve the SVF-guided singular directions.

\section{Datasets, Training, and Evaluation Protocol}
\label{sec:appendix_datasets_protocol}

This section summarizes the datasets used in PALoRA and clarifies their roles in the two-phase training and evaluation pipeline. We distinguish between \emph{skill} datasets, which define the capabilities to be preserved and are used to train the SVF experts in Phase~1, and \emph{knowledge} data, which provide the factual supervision used for LoRA-based knowledge injection in Phase~2. This separation is central to PALoRA: the skill data identify the subspace to protect, while the knowledge data define the new information to be acquired.

\subsection{Overview of dataset roles}

In our experiments, GSM8K, MBPP, and AI2-ARC serve as skill datasets. Each of them is used to train a separate SVF expert in Phase~1, and each reappears after Phase~2 only as an evaluation benchmark for measuring skill preservation. They are therefore not used to optimize the knowledge-injection adapter.

The knowledge-injection data consist of TriviaQA-style factual question--answer pairs labeled according to whether the frozen base model can already answer them reliably. During Phase~2, these examples are used to supervise factual adaptation, whereas final knowledge recall is measured on the subset of facts labeled \texttt{Unknown}. Items labeled \texttt{HighlyKnown} are not the primary target of factual acquisition, but serve as a control for general factual stability.

\subsection{Skill datasets}

\subsubsection{GSM8K}

GSM8K~\cite{cobbe2021trainingverifierssolvemath} is a benchmark of grade-school mathematical word problems and serves as our mathematical reasoning task. In Phase~1, we train a dedicated SVF expert on the GSM8K training split, using a lightweight held-out partition for model selection during the reinforcement-learning loop. GSM8K is not used during Phase~2 optimization.

After knowledge injection, skill preservation is evaluated with final-answer accuracy on the GSM8K benchmark. The model generates a free-form solution, and the prediction is scored by extracting the final numerical answer from the completion and comparing it against the ground-truth value. This metric isolates whether mathematical reasoning remains intact after factual updating.

\subsubsection{MBPP}

MBPP~\cite{austin2021programsynthesislargelanguage} is a benchmark for Python code generation from natural-language problem descriptions and serves as our coding-skill task. In Phase~1, we train an SVF expert on a shuffled subset of MBPP problems, with disjoint training and validation partitions used for learning and checkpoint selection. MBPP is not used in Phase~2 training.

After adaptation, we evaluate skill preservation with \texttt{pass@1} under execution-based evaluation. For each problem, the model generates a single program, which is then executed against the corresponding unit tests. A sample is marked correct only if the generated code passes the test suite, so the metric reflects functional program synthesis rather than surface-level similarity.

\subsubsection{AI2-ARC}

We use the AI2-ARC benchmark~\cite{clark2018thinksolvedquestionanswering} as our science reasoning task. In Phase~1, the SVF expert is trained on ARC-Easy, again using a held-out partition for model selection. Phase~2 does not optimize on ARC data.

For evaluation, we report multiple-choice exact-match accuracy. Questions are presented with lettered answer options, and a prediction is counted as correct if the model selects the correct option after generation. This metric measures whether the scientific reasoning ability associated with the protected SVF subspace is preserved after factual adaptation.

\subsection{Knowledge-injection dataset}

\subsubsection{Unknown facts}

Knowledge injection is performed using TriviaQA-style factual question--answer pairs that are labeled according to whether the frozen base model can already answer them reliably~\cite{pletenev2025knowledge}. The primary target of adaptation is the subset labeled \texttt{Unknown}, which contains facts that are not robustly recalled before fine-tuning. These examples define the factual content to be injected during Phase~2.

For each experiment, we construct a knowledge-injection set by sampling a fixed number of \texttt{Unknown} items. In the main paper, we consider subsets of size 100, 250, 500, and 1000. Each subset defines an independent Phase~2 run, so the factual load can be varied while keeping the underlying protocol unchanged.

\subsubsection{HighlyKnown facts}

In addition to the Unknown subset, we include examples labeled \texttt{HighlyKnown}, corresponding to facts that the base model already answers reliably before adaptation. During Phase~2, these examples are mixed with the Unknown facts at a ratio of \(1{:}3\) (one Unknown fact for three HighlyKnown facts). We adopt this mixture following the setting of Pletenev et al.~\cite{pletenev2025knowledge}, with the goal of coupling factual injection with continued exposure to previously stable knowledge.

This design reflects the stability--plasticity trade-off studied in the main paper. Unknown facts define the new information to be injected, while the accompanying HighlyKnown examples act as a retention signal that helps preserve already acquired knowledge and reduces the risk of catastrophic forgetting during adaptation. At evaluation time, the Unknown subset remains the primary target for measuring factual recall, since it captures whether the model successfully learns the injected facts.

\subsection{Training protocol by phase}

\subsubsection{Phase~1: skill expert training}

Phase~1 trains one SVF expert per skill dataset. Concretely, GSM8K defines the mathematical reasoning expert, MBPP defines the coding expert, and AI2-ARC defines the science reasoning expert. In each case, the trainable parameters are the SVF scaling vectors applied to the singular values of the frozen MLP weight matrices, while the singular vectors themselves remain fixed to those of the pre-trained model.

Following Transformer-Squared~\cite{sun2025transformer}, skill specialization is learned with reinforcement learning rather than supervised imitation of reference completions. For each task, the model generates outputs on skill prompts, and a task-specific evaluator assigns a scalar reward based on correctness. The SVF parameters are then updated using this outcome-based signal, so that the learned scaling vector reflects the singular components most strongly engaged by the target skill.

After training, the best-performing SVF expert is frozen and used as the geometric probe in PALoRA. Its scaling vector defines the protected index set \(\mathcal{I}_{\mathrm{crit}}\), which determines the singular directions that should be preserved during factual adaptation.

\subsubsection{Phase~2: knowledge injection}

In Phase~2, the base model and the Phase~1 SVF expert are kept frozen, and only a LoRA adapter is trained on the factual corpus. For a given run, the protected subspace is defined by the SVF expert associated with a single target skill, so each Phase~2 experiment corresponds to a particular \((\text{skill}, |\mathcal{U}|)\) configuration.

The training objective combines the standard supervised fine-tuning loss on factual question--answer pairs with the PALoRA orthogonality penalty. The supervised term encourages the adapter to produce the target factual answer, while the orthogonality term prevents the LoRA update from aligning with the singular directions identified as skill-critical by the frozen SVF expert. Baseline methods use the same factual supervision but differ in how, or whether, they constrain the adaptation update.

\paragraph{Answer-only supervision.}
Each factual example is presented as a question followed by its target answer. The language-model loss is computed only on the answer span, while the question tokens are masked. The model therefore conditions on the full question at training time, but the optimization target is restricted to producing the desired factual completion rather than reconstructing the prompt itself.

\subsection{Evaluation protocol}

\subsubsection{Skill evaluation}

After Phase~2, we evaluate the adapted model on the skill benchmark corresponding to the SVF expert used in that run. GSM8K is scored with final-answer accuracy, MBPP with \texttt{pass@1}, and AI2-ARC with multiple-choice exact-match accuracy. These metrics quantify \emph{skill preservation}, that is, the extent to which the target reasoning capability survives factual adaptation.

\subsubsection{Knowledge evaluation}

We additionally evaluate the adapted model on the \texttt{Unknown} portion of the factual dataset. At test time, the model is prompted with the question only, and a prediction is counted as correct if the generated answer matches one of the accepted aliases for the target fact. The resulting recall measures the fraction of previously unknown facts that are answered correctly after adaptation.

We also evaluate the \texttt{HighlyKnown} subset with the same scoring rule. This provides a control for factual drift by checking whether the model's behavior on previously reliable answers remains stable after knowledge injection.

\subsubsection{Protocol consistency across methods}

All methods in our comparison use the same factual datasets, the same skill benchmarks, and the same evaluation rules for a given model and fact-set size. Differences between methods therefore arise from the adaptation mechanism itself rather than from changes in data, prompting, or scoring. This common protocol is essential for isolating the effect of SVF-guided orthogonal protection in PALoRA.

\section{Baseline Adaptation and Fairness Protocol}
\label{sec:baseline_fairness}

To ensure that the baseline comparisons are directly meaningful, we adapt each baseline to the same data interface, optimization setting, and evaluation protocol used for PALoRA, so that performance differences can be attributed to the adaptation principle itself rather than to mismatched experimental conditions.

\paragraph{Common protocol.}
All methods use the same base checkpoints, the same knowledge-injection datasets, and the same train--validation--test protocol. For knowledge injection, we use the same answer-only supervised objective across methods, masking the prompt tokens and optimizing only the target answer tokens. Across both model families, we match the adapter configuration used in our main experiments, including adapter rank \(r=16\), scaling \(\alpha=32\), dropout \(0.05\), and batch size \(8\). The optimization schedule is model-specific: for Llama-3.1-8B-Instruct, we use a learning rate of \(2\times10^{-4}\) and train for \(10\) epochs, whereas for Mistral-7B-Instruct-v0.3, we use a learning rate of \(2\times10^{-5}\) and train for \(5\) epochs. Evaluation is likewise standardized: knowledge performance is measured with the same Unknown and HighlyKnown splits, and skill preservation is assessed with the same task-specific evaluation pipeline for GSM8K, MBPP, and AI2-ARC.

\paragraph{LoRA-only.}
The LoRA-only baseline uses the same knowledge-injection setting as PALoRA, but removes the structural orthogonality constraint. It is therefore trained as an unconstrained adapter on the same target MLP projections, with the same adapter capacity and the same optimization schedule as PALoRA. This baseline isolates the effect of the constraint itself, while keeping the underlying adaptation budget fixed.

\paragraph{OPLoRA.}
For OPLoRA, we compute the protected subspace from the same pre-trained model weights and apply the method on the same MLP layers used by PALoRA. Following the original OPLoRA setting, we use a protected subspace size of \(k=128\). We otherwise keep the adapter rank, training objective, optimizer, and training schedule matched to our default setting. As a result, the comparison focuses on the protection mechanism: OPLoRA constrains updates with respect to top singular directions of the pre-trained weights, whereas PALoRA constrains LoRA using the SVF-identified skill-relevant subspace.

\paragraph{CorDA.}
We implement CorDA in Knowledge-Preserved Mode and evaluate it under the same downstream knowledge-injection protocol as the other baselines. Its calibration statistics are computed from skill-specific training data using 256 calibration samples. The subsequent fine-tuning stage follows the same optimizer family, training budget, and evaluation procedure as in the other LoRA-based comparisons, so the comparison remains aligned at the level of data and protocol.

For evaluation, we convert the trained CorDA adapter into a standard LoRA-form adapter so that it can be assessed through the same inference and reporting pipeline as the other methods. This conversion changes the effective adapter rank from 16 in the original CorDA parameterization to 32 in the converted LoRA form. We report CorDA under this converted setting to preserve a common evaluation interface while retaining its knowledge-preserved initialization mechanism.

\paragraph{Fairness of comparison.}
Under this protocol, all competing methods are aligned in model family, data, optimization budget, target layers, and evaluation metrics. The reported differences therefore reflect the underlying adaptation strategy---unconstrained LoRA, top-component protection, context-preserving initialization, or SVF-guided orthogonal protection---rather than artifacts of inconsistent training or evaluation pipelines.

\section{Additional SVF Adapter Visualizations}
\label{app:svf_visualizations}

We provide the same SVF visualizations as in Figure~\ref{fig:distributed_evidence} for all skill experts across both model families. For each SVF expert, we show (i) the sorted scaling magnitudes \( |z_i^* - 1| \) across MLP layers and (ii) the top-64 SVF-selected components, sorted by \( |z_i^* - 1| \) from largest to smallest.

\begin{figure}[!htbp]
    \centering
    \begin{minipage}{0.47\textwidth}
        \centering
        \includegraphics[width=\textwidth]{Figures/llama_gsm8k_top_scalings.png}
        \small (a) SVF scaling magnitudes \( |z_i^*-1| \) over singular-value index.
    \end{minipage}
    \hfill
    \begin{minipage}{0.47\textwidth}
        \centering
        \includegraphics[width=\textwidth]{Figures/llama_top_64_gsm8k_visualized.png}
        \small (b) Top-64 SVF scaling magnitudes \( |z_i^*-1| \), sorted from largest to smallest.
    \end{minipage}
    \caption{SVF visualization for the Llama-3.1-8B-Instruct GSM8K expert.}
    \label{fig:appendix_llama_gsm8k}
\end{figure}

\begin{figure}[!htbp]
    \centering
    \begin{minipage}{0.47\textwidth}
        \centering
        \includegraphics[width=\textwidth]{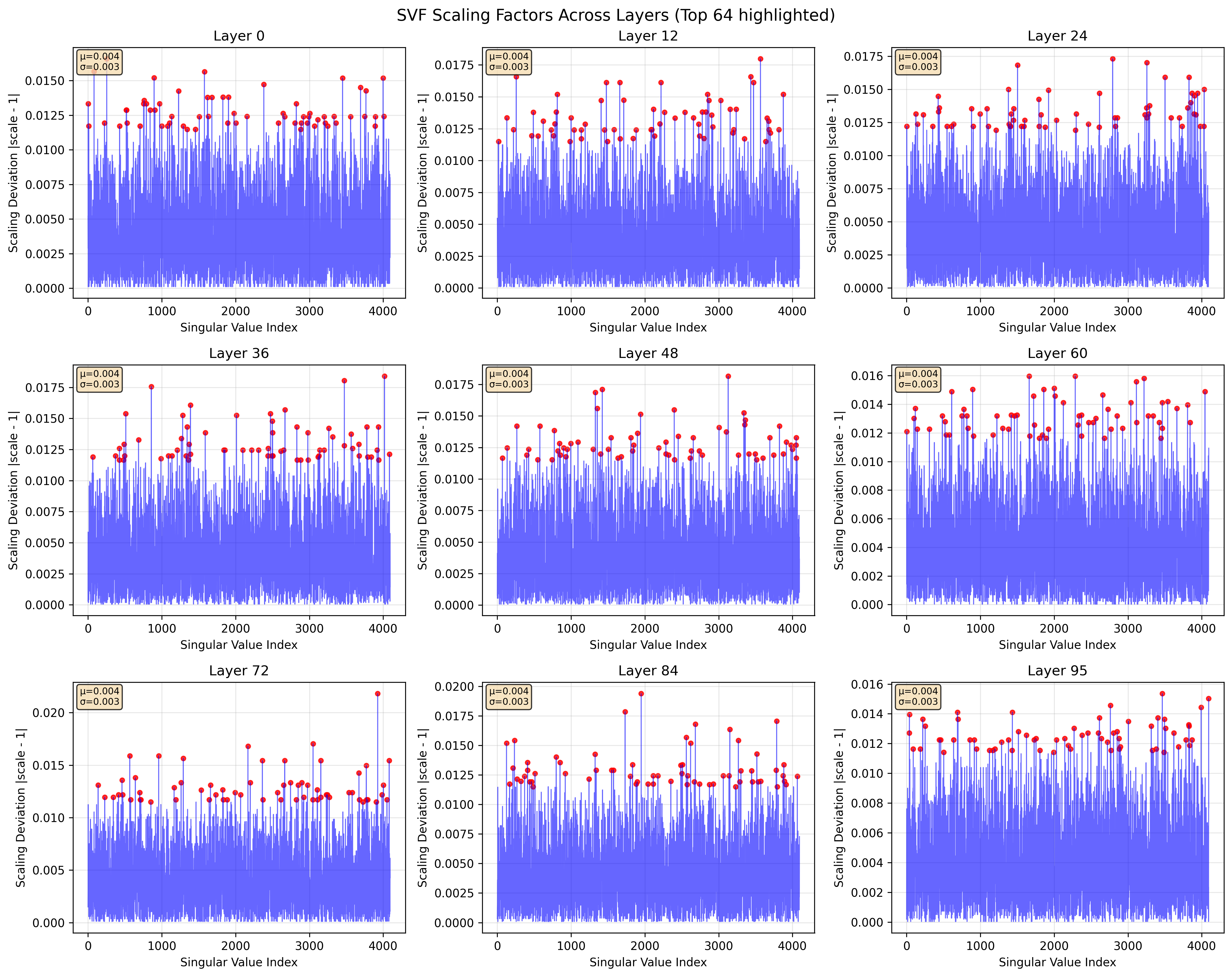}
        \small (a) SVF scaling magnitudes \( |z_i^*-1| \) over singular-value index.
    \end{minipage}
    \hfill
    \begin{minipage}{0.47\textwidth}
        \centering
        \includegraphics[width=\textwidth]{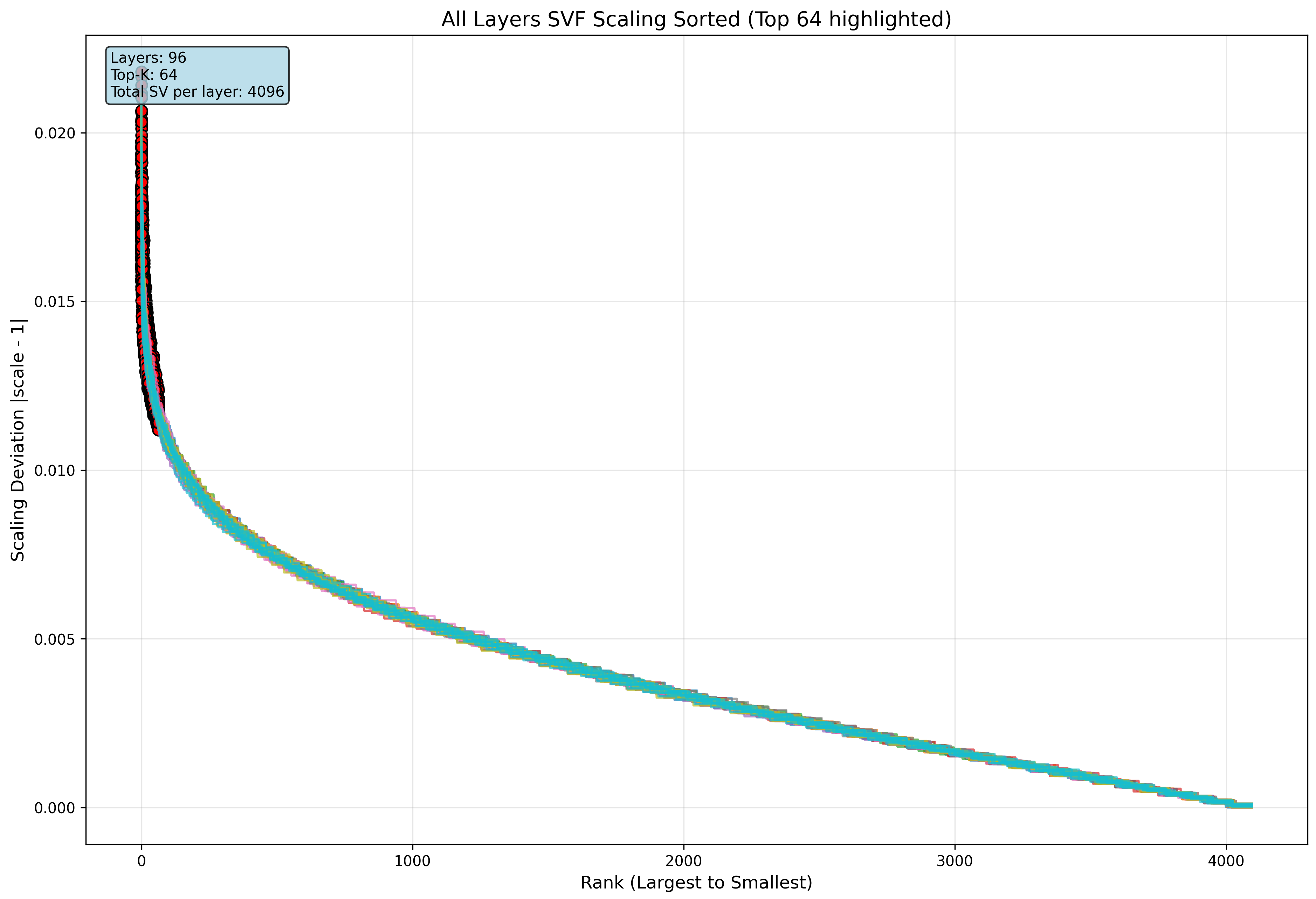}
        \small (b) Top-64 SVF scaling magnitudes \( |z_i^*-1| \), sorted from largest to smallest.
    \end{minipage}
    \caption{SVF visualization for the Llama-3.1-8B-Instruct MBPP expert.}
    \label{fig:appendix_llama_mbpp}
\end{figure}

\begin{figure}[!htbp]
    \centering
    \begin{minipage}{0.47\textwidth}
        \centering
        \includegraphics[width=\textwidth]{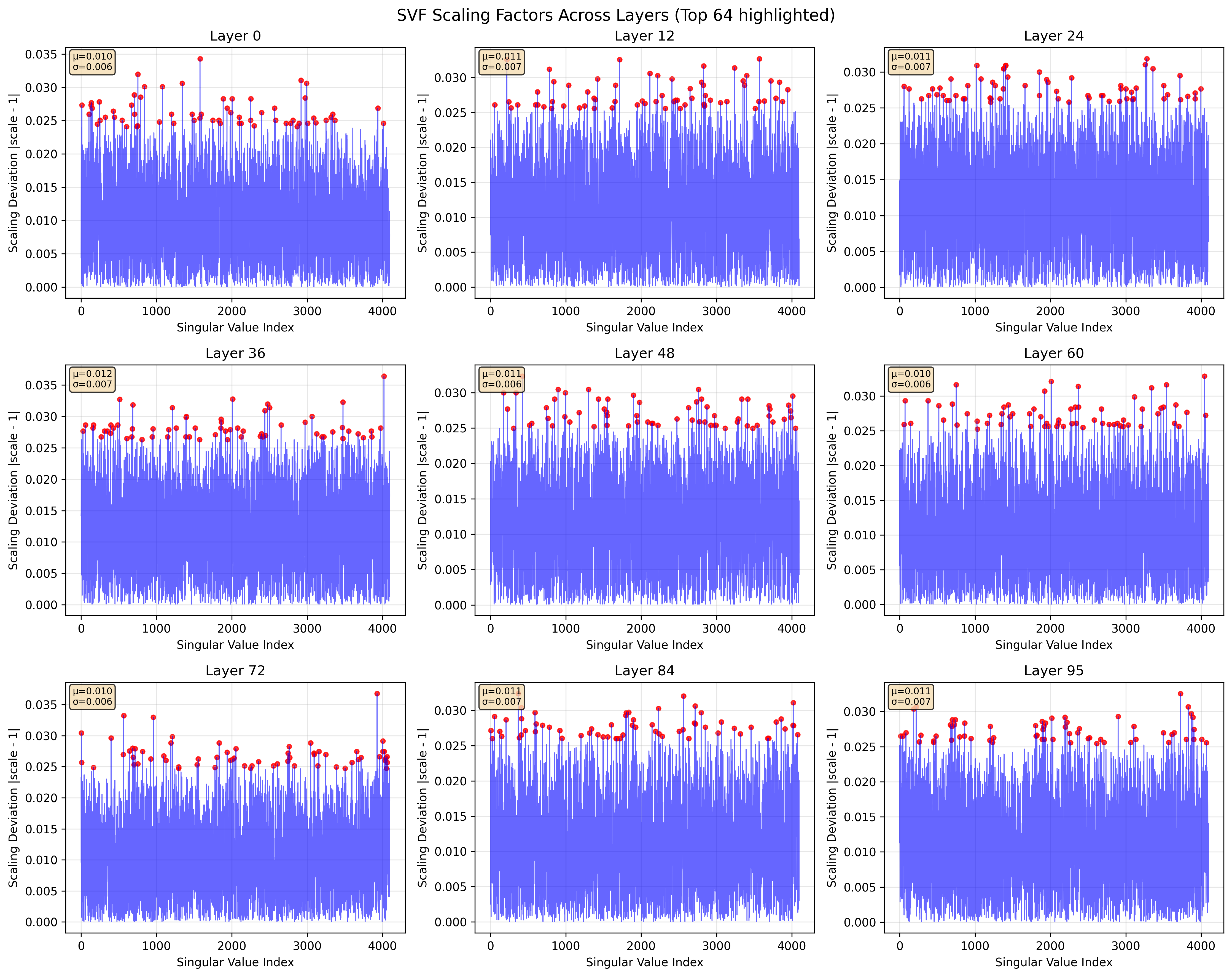}
        \small (a) SVF scaling magnitudes \( |z_i^*-1| \) over singular-value index.
    \end{minipage}
    \hfill
    \begin{minipage}{0.47\textwidth}
        \centering
        \includegraphics[width=\textwidth]{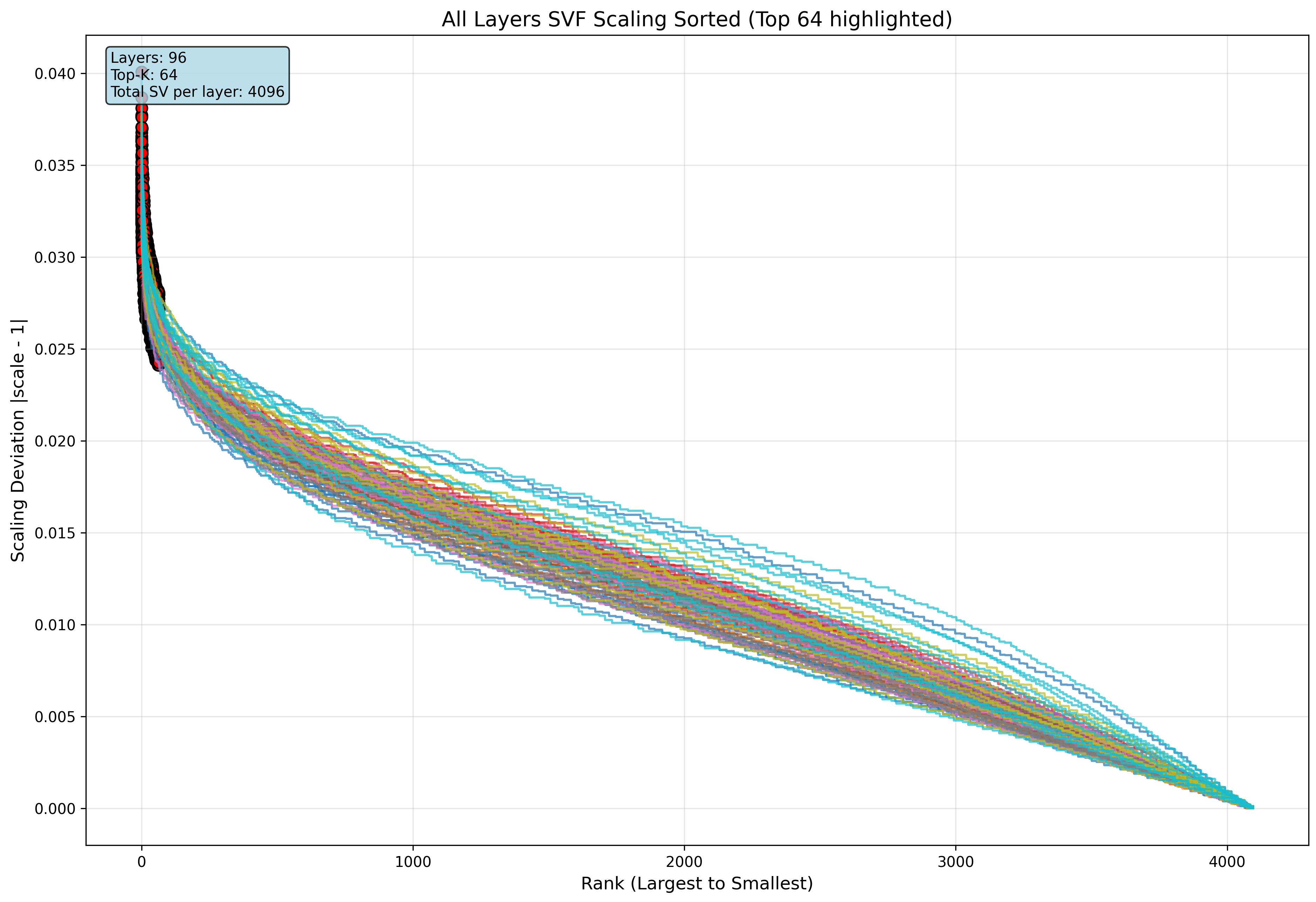}
        \small (b) Top-64 SVF scaling magnitudes \( |z_i^*-1| \), sorted from largest to smallest.
    \end{minipage}
    \caption{SVF visualization for the Llama-3.1-8B-Instruct AI2-ARC expert.}
    \label{fig:appendix_llama_arc}
\end{figure}

\begin{figure}[!htbp]
    \centering
    \begin{minipage}{0.47\textwidth}
        \centering
        \includegraphics[width=\textwidth]{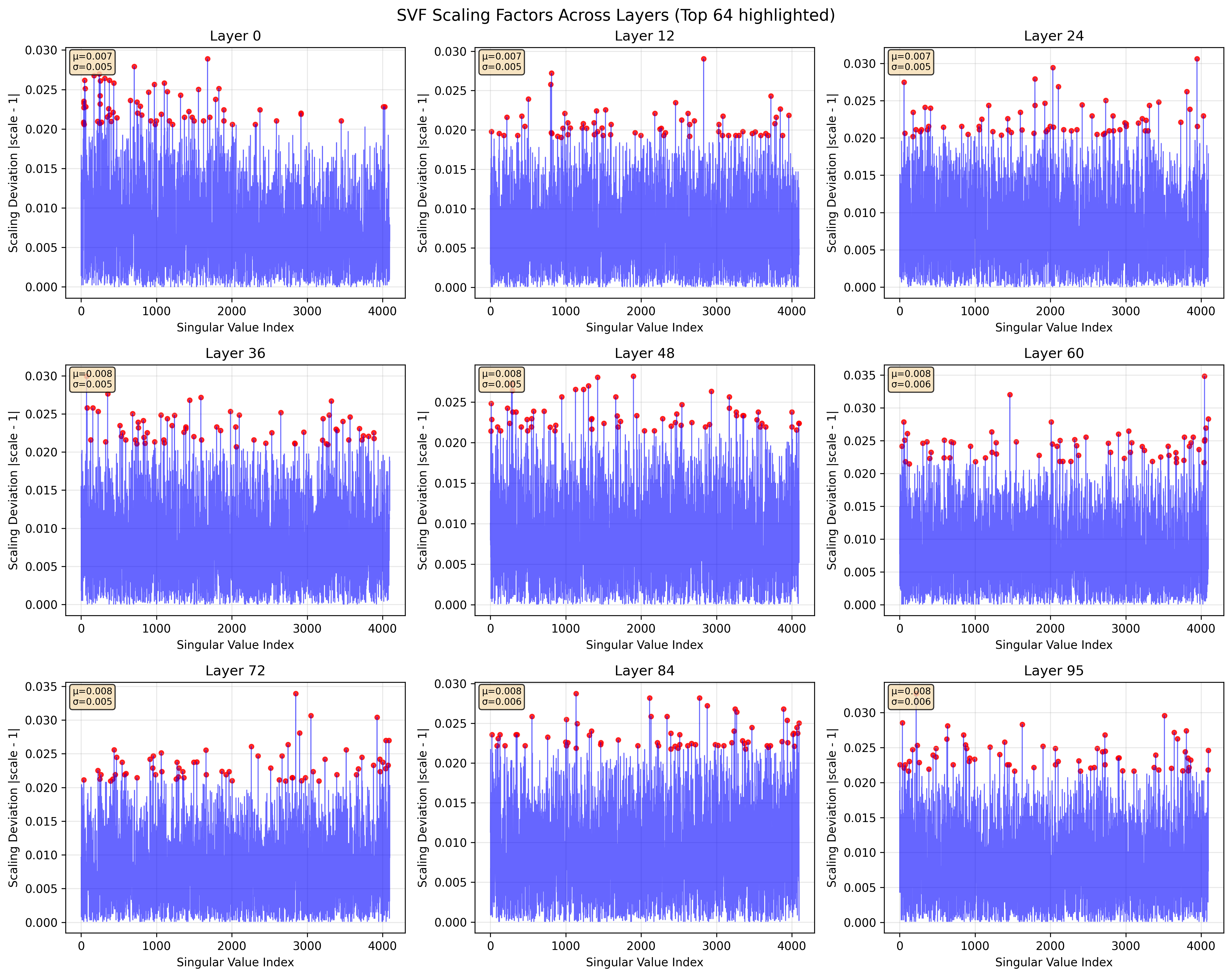}
        \small (a) SVF scaling magnitudes \( |z_i^*-1| \) over singular-value index.
    \end{minipage}
    \hfill
    \begin{minipage}{0.47\textwidth}
        \centering
        \includegraphics[width=\textwidth]{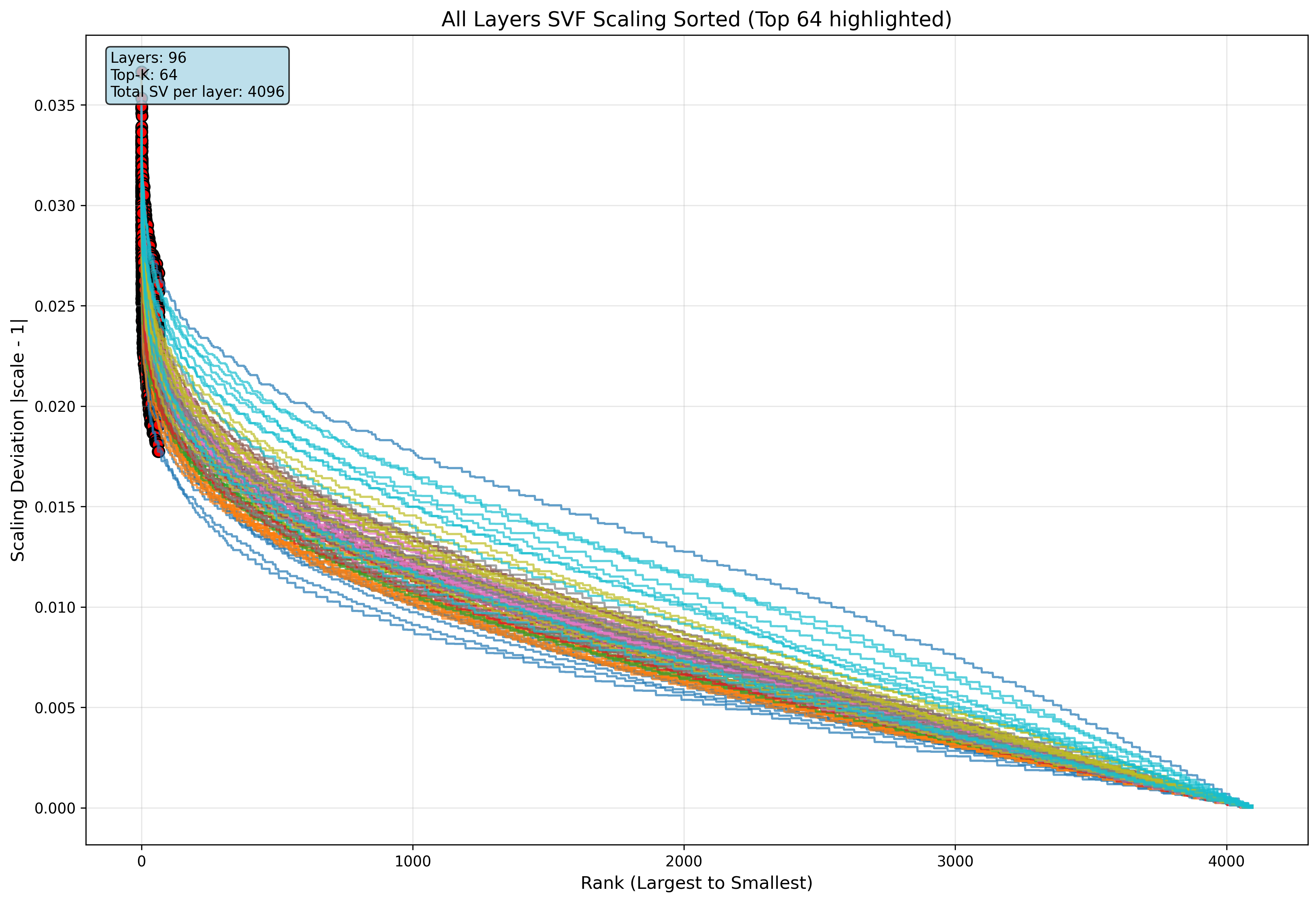}
        \small (b) Top-64 SVF scaling magnitudes \( |z_i^*-1| \), sorted from largest to smallest.
    \end{minipage}
    \caption{SVF visualization for the Mistral-7B-Instruct-v0.3 GSM8K expert.}
    \label{fig:appendix_mistral_gsm8k}
\end{figure}

\begin{figure}[!htbp]
    \centering
    \begin{minipage}{0.47\textwidth}
        \centering
        \includegraphics[width=\textwidth]{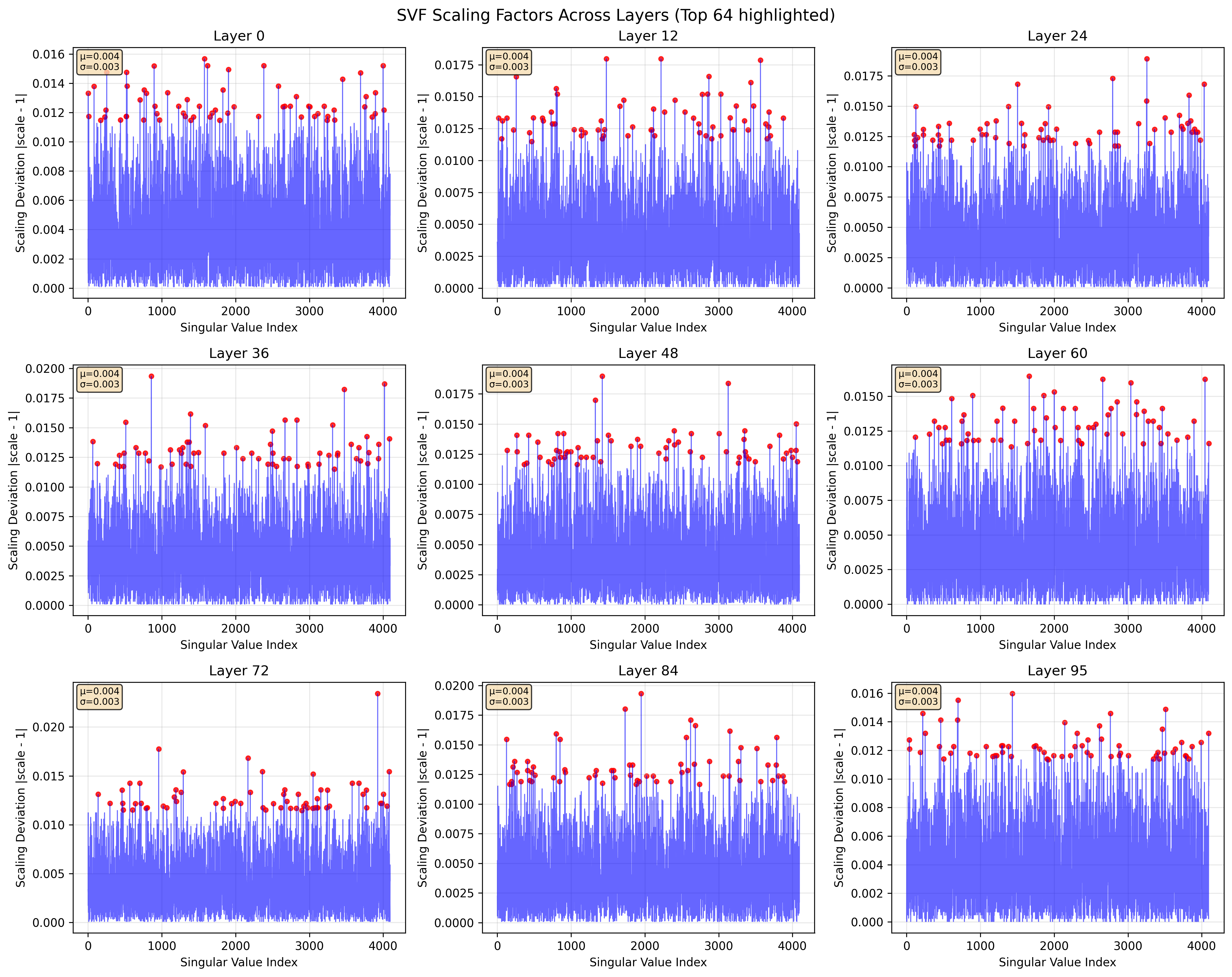}
        \small (a) SVF scaling magnitudes \( |z_i^*-1| \) over singular-value index.
    \end{minipage}
    \hfill
    \begin{minipage}{0.47\textwidth}
        \centering
        \includegraphics[width=\textwidth]{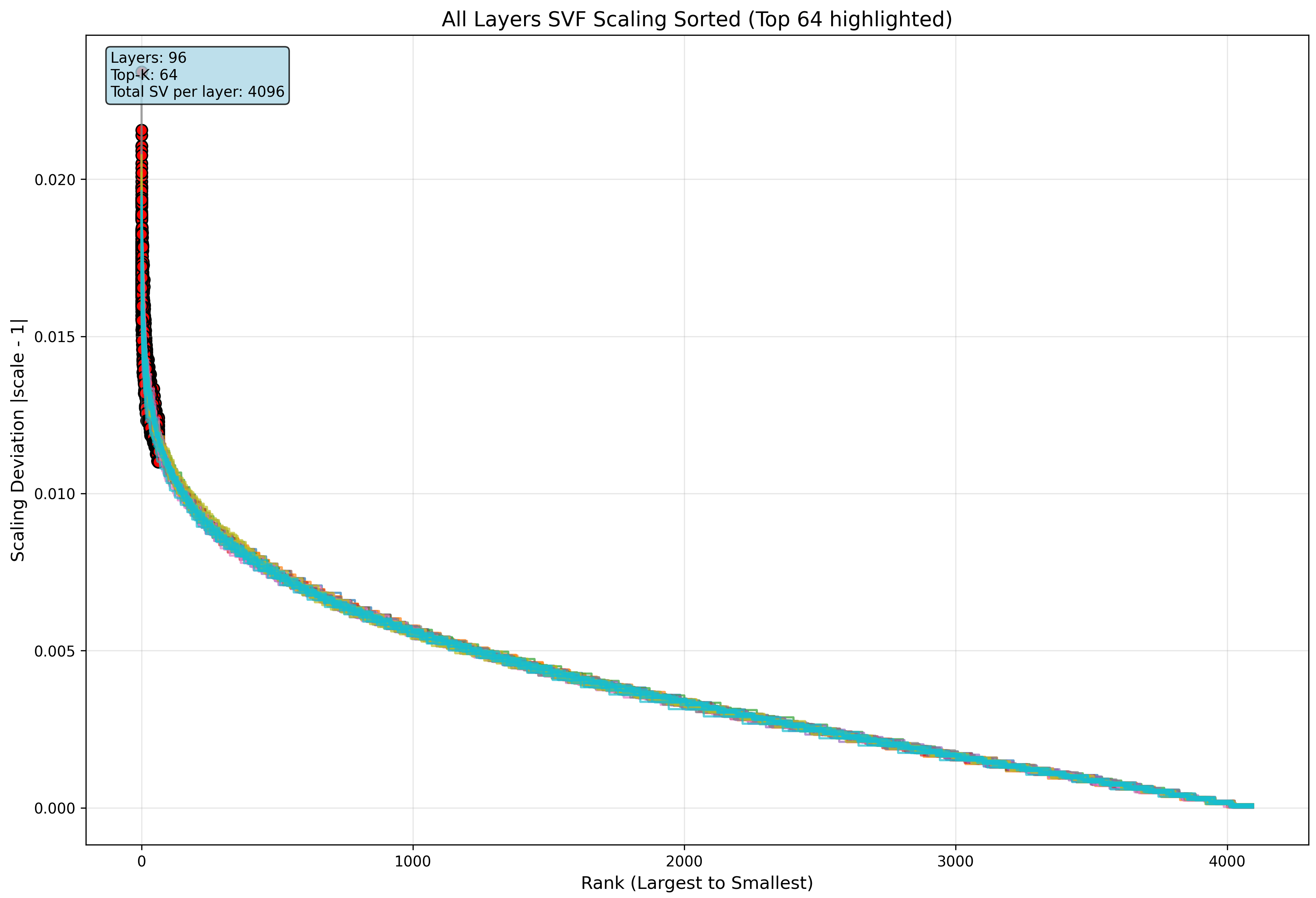}
        \small (b) Top-64 SVF scaling magnitudes \( |z_i^*-1| \), sorted from largest to smallest.
    \end{minipage}
    \caption{SVF visualization for the Mistral-7B-Instruct-v0.3 MBPP expert.}
    \label{fig:appendix_mistral_mbpp}
\end{figure}

\begin{figure}[!htbp]
    \centering
    \begin{minipage}{0.47\textwidth}
        \centering
        \includegraphics[width=\textwidth]{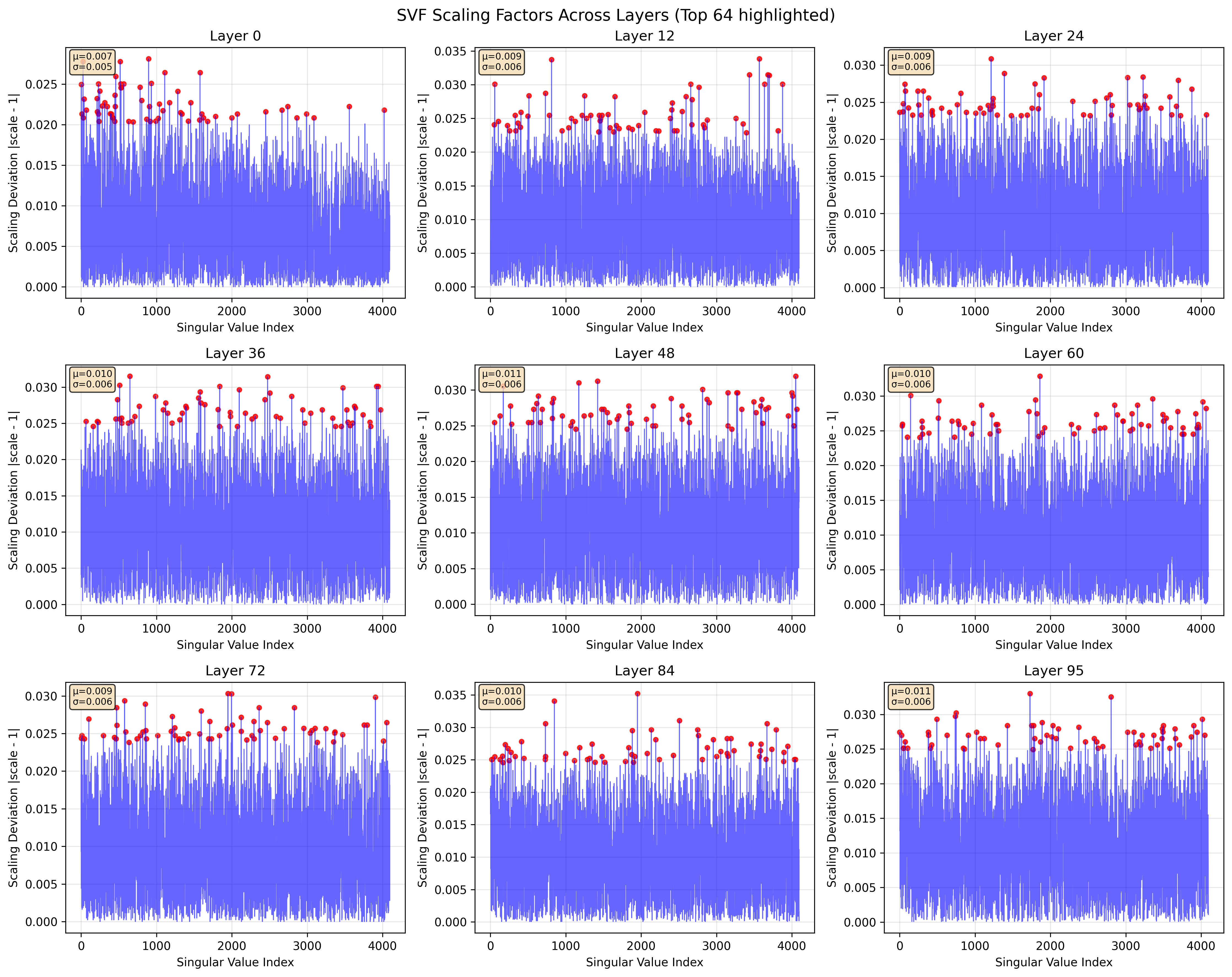}
        \small (a) SVF scaling magnitudes \( |z_i^*-1| \) over singular-value index.
    \end{minipage}
    \hfill
    \begin{minipage}{0.47\textwidth}
        \centering
        \includegraphics[width=\textwidth]{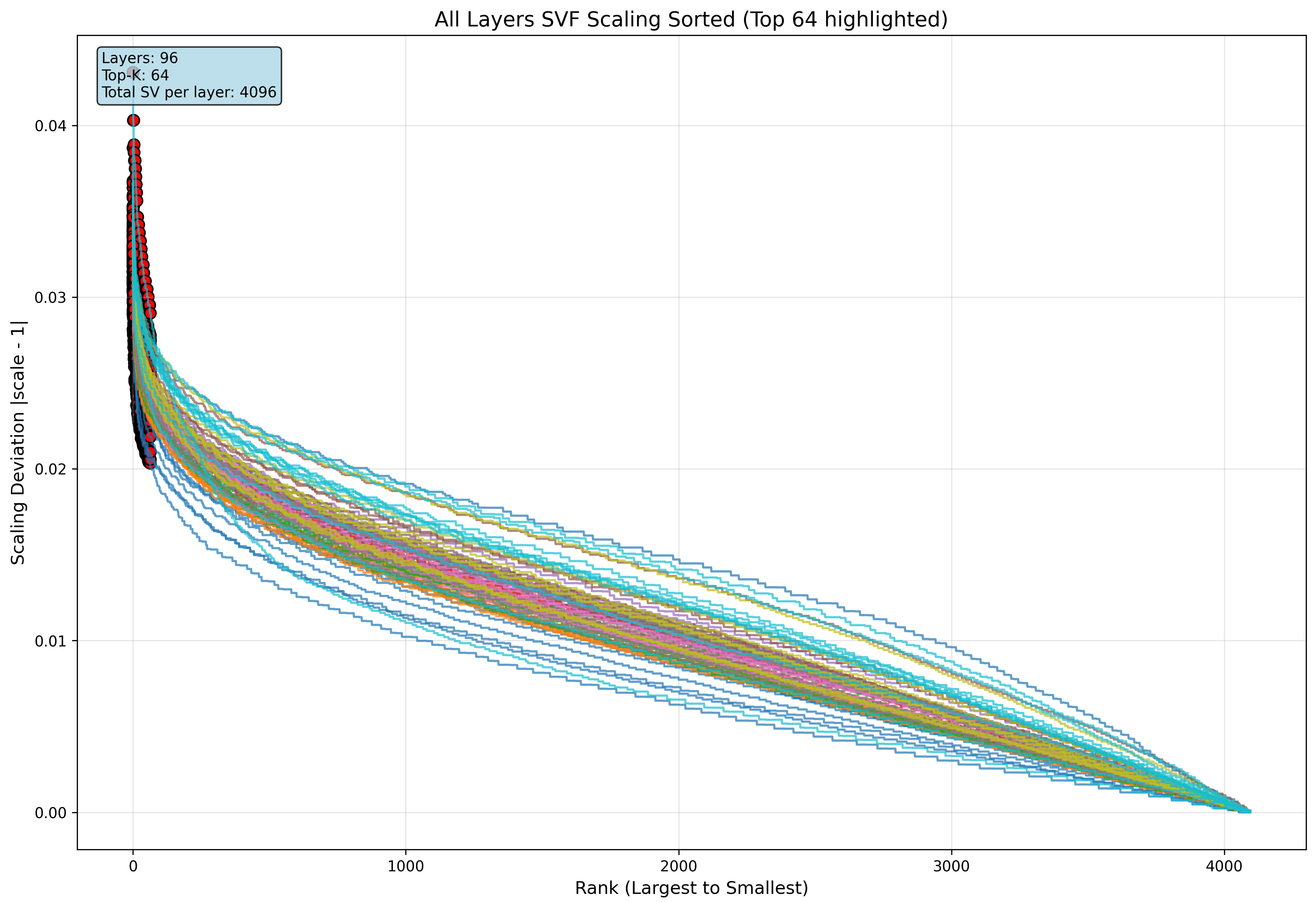}
        \small (b) Top-64 SVF scaling magnitudes \( |z_i^*-1| \), sorted from largest to smallest.
    \end{minipage}
    \caption{SVF visualization for the Mistral-7B-Instruct-v0.3 AI2-ARC expert.}
    \label{fig:appendix_mistral_arc}
\end{figure}

\end{document}